\documentclass[letterpaper]{article}

\usepackage[preprint]{aaai2027}
\usepackage[hyphens]{url}
\usepackage{graphicx}
\usepackage{natbib}
\usepackage{caption}
\usepackage{amsmath}
\usepackage{amssymb}
\usepackage{amsthm}
\usepackage{microtype}
\usepackage{algorithm}
\usepackage{algpseudocode}
\usepackage{dblfloatfix}
\usepackage{booktabs}
\usepackage{xcolor}
\usepackage{colortbl}
\definecolor{tableheader}{RGB}{246,227,223}
\definecolor{tablebody}{RGB}{242,242,242}
\definecolor{tableours}{RGB}{253,248,227}
\definecolor{annotationcyan}{RGB}{0,139,175}
\newcommand{\tabletoprule}{\specialrule{\heavyrulewidth}{0pt}{0pt}}
\newcommand{\tablemidrule}{\specialrule{\lightrulewidth}{0pt}{0pt}}
\newcommand{\tablebottomrule}{\specialrule{\heavyrulewidth}{0pt}{0pt}}
\newcommand{\tablesingleheadpad}{%
  \vrule width 0pt
  height \dimexpr\ht\strutbox+4.52pt\relax
  depth \dimexpr\dp\strutbox+4.52pt\relax}

\newcommand{\method}{SpikeOPD}
\newcommand{\spad}{SpAD}
\newcommand{\bispikclm}{BiSpikCLM}
\newcommand{\loss}{\mathcal{L}}
\newcommand{\KL}{\mathrm{KL}}
\newcommand{\E}{\mathbb{E}}
\newcommand{\V}{\mathcal{V}}
\newcommand{\pospart}[1]{\left[#1\right]_{+}}
\newtheorem{proposition}{Proposition}

\title{\method{}: Stable On-Policy Distillation for Autoregressive Spiking Language Models}
\author{%
Enqiao Lu\textsuperscript{1,2,3}\quad
Xingrui Yu\textsuperscript{2,\(\dagger\)}\quad
Yiwei Fu\textsuperscript{4}\quad
Zhenglin Wan\textsuperscript{5}\quad
Pengfei Zhou\textsuperscript{5}\\
Wangbo Zhao\textsuperscript{5}\quad
Muqing Jian\textsuperscript{1,6}\quad
Xueyi Zhang\textsuperscript{5}\quad
Yang You\textsuperscript{5}\quad
Ivor Tsang\textsuperscript{2,3}%
}
\affiliations{%
{\small
\textsuperscript{1} The Chinese University of Hong Kong, Shenzhen\quad
\textsuperscript{2} Agency for Science, Technology and Research (A*STAR), Singapore\\
\textsuperscript{3} Nanyang Technological University, Singapore\quad
\textsuperscript{4} Peking University\quad
\textsuperscript{5} National University of Singapore\quad
\textsuperscript{6} Rice University}%
}

\begin{document}
\maketitle
\begingroup
\renewcommand{\thefootnote}{\fnsymbol{footnote}}
\footnotetext[2]{Corresponding author: \texttt{yu\_xingrui@a-star.edu.sg}.}
\endgroup

\begin{abstract}
Spiking neural networks (SNNs) offer a path to energy-efficient language modeling through sparse encoding and event-driven computation, but training capable spiking language models from scratch remains difficult. A practical alternative is ANN-to-SNN migration through knowledge distillation (KD), where a pretrained artificial neural network (ANN) teacher supervises an SNN student. Existing migration approaches distill on fixed corpus prefixes, whereas autoregressive inference conditions on self-generated prefixes, creating \textbf{prefix-source mismatch}. It manifests as \textbf{output-policy mismatch} with the ANN teacher and \textbf{internal spiking-dynamics drift} between self-generated and matched corpus prefixes. On-policy distillation (OPD) offers a natural way to mitigate both manifestations by continuing teacher supervision on self-generated prefixes. We evaluate a teacher-only full-KL variant, Vanilla OPD, via a controlled stress test and observe it may suffer from delayed rollout-feedback collapse. This result shows that on-policy coverage alone does not ensure stable adaptation. Motivated by these findings, we propose \method{}, a stable on-policy distillation framework for autoregressive SNNs that learns from self-generated prefixes while maintaining rollout stability. It applies full-KL teacher correction to reduce output-policy mismatch, while matched-prefix policy anchoring constrains policy departure from the frozen reference SNN on the same prefixes. Layerwise spike regularization further limits firing-rate deviations during on-policy adaptation. Across three model scales, \method{} improves average accuracy over the corresponding KD SNNs by 0.8, 1.7, and 2.9 points at 0.125B, 0.35B, and 1.3B, respectively, while preserving their sparse-compute profiles.

\end{abstract}

\section{Introduction}

Autoregressive language generation repeatedly applies a large network to produce one token at a time, making dense activation computation and data movement persistent inference costs \citep{horowitz20141}. Spiking neural networks (SNNs) instead use sparse binary spikes to trigger computation on event-driven neuromorphic hardware \citep{davies2018loihi,roy2019towards}. This execution model is attractive for long generation, where sparse computation recurs across decoding steps. Recent work has extended SNNs from vision to language encoders and causal generators \citep{zhu2023spikegpt,lv2025spikebert,bal2024spikingbert,xing2024spikelm,xing2024spikellm,guo2026bispikclm,zhou2026winner}. Yet large generative SNNs remain difficult to train from scratch \citep{fang2021deep,zheng2021going}. We therefore ask how to transfer a pretrained ANN language model into an SNN without losing language ability or sparse activity.

Existing ANN-to-SNN migration strategies directly inherit pretrained ANN weights and then calibrate or fine-tune the resulting SNN \citep{diehl2015fast,rueckauer2017conversion,sengupta2019going,rathi2020enabling,deng2021optimal}, introduce quantization-like spike representations for language models \citep{xing2024spikelm}, or transfer ANN knowledge through distillation \citep{hinton2015distilling}. This paper focuses on distillation-based ANN-to-SNN migration, where an ANN teacher supervises an SNN student on fixed corpus prefixes. For example, \bispikclm{} is a causal SNN trained by Spike-Aware Alignment Distillation (\spad{}), which transfers embeddings, attention, features, and token targets from an ANN teacher \citep{guo2026bispikclm}. During autoregressive inference, the SNN instead conditions on self-generated prefixes, and one changed token can alter every later prefix. This prefix-source mismatch is related to exposure bias and learner-induced covariate shift \citep{bengio2015scheduled,ross2011reduction}. A generative SNN additionally exposes recurrent spiking dynamics beyond the cited dense-model formulations \citep{bellec2018long}.

We call the training-to-inference shift from corpus to self-generated prefixes prefix-source mismatch, related to exposure bias and learner-induced covariate shift \citep{bengio2015scheduled,ross2011reduction}. Paired-prefix controls isolate two manifestations under matched prompts, positions, and lengths: output-policy mismatch with the ANN teacher and internal spiking-dynamics drift. This gives our first question: \textbf{RQ1: How does prefix-source mismatch manifest as output-policy mismatch and internal spiking-dynamics drift?}

To mitigate both manifestations of prefix-source mismatch, a common choice is on-policy distillation (OPD), which continues teacher supervision on self-generated prefixes. We examine its teacher-only full-KL variant, Vanilla OPD. Because the policy being updated also generates its training prefixes, a large correction can redirect the next rollout and change the following training distribution. This feedback motivates controlling how each correction redirects later rollouts. This gives our second question: \textbf{RQ2: Can Vanilla OPD adapt without sustained degeneration of rollout statistics, which we call rollout-feedback collapse?}

We answer both questions empirically. For RQ1, paired-prefix controls show two manifestations of prefix-source mismatch: greater output-policy mismatch with the ANN teacher and internal spiking-dynamics drift across spike rates, hidden states, and membrane potentials. A separate stress test answers RQ2: Vanilla OPD preserves early task accuracy before entering delayed rollout-feedback collapse. These results shape \method{}. Full-KL teacher correction addresses output-policy mismatch, while layerwise spike regularization penalizes firing-rate deviations from a frozen reference SNN. The instability exposed by RQ2 motivates matched-prefix policy anchoring, which compares the active SNN and frozen reference SNN on the same self-generated prefixes to limit policy movement. Hidden-state and membrane-potential gaps remain diagnostics. Across eight tasks, \method{} improves the corresponding KD SNN baselines by 0.8, 1.7, and 2.9 average points at 0.125B, 0.35B, and 1.3B. Our contributions are:
\begin{itemize}
    \item \textbf{Diagnostic findings.} Paired-prefix diagnostics show that prefix-source mismatch manifests as output-policy mismatch and internal spiking-dynamics drift. Matched-seed stress tests further reveal delayed rollout-feedback collapse in Vanilla OPD.
    
    \item \textbf{Stable on-policy distillation.} We propose \method{}, combining full-KL teacher correction, matched-prefix policy anchoring, and layerwise spike regularization over self-generated prefixes.
    
    \item \textbf{Multi-scale validation.} Across 0.125B, 0.35B, and 1.3B SNNs, \method{} improves eight-task average accuracy by 0.8, 1.7, and 2.9 points, remains non-collapsed in all 10 matched-seed runs at 0.125B, and retains the analytical sparse-compute profile.
\end{itemize}

\section{Related Work}

\paragraph{Training and transferring spiking language models.}
Deep SNNs are commonly obtained through surrogate-gradient training, ANN-to-SNN conversion, or teacher-student transfer. Surrogate gradients enable temporal training \citep{neftci2019surrogate,wu2018spatio,fang2021deep,zheng2021going}, conversion calibrates inherited weights and firing rates \citep{diehl2015fast,rueckauer2017conversion,sengupta2019going,rathi2020enabling,deng2021optimal}, and distillation supervises logits, features, or attention \citep{hinton2015distilling,romero2014fitnets,zagoruyko2016paying,sanh2019distilbert,sun2019patient,jiao2020tinybert}. Spike-form attention was introduced in vision Transformers before language work progressed from distilled encoders to recurrent and larger causal generators \citep{zhou2022spikformer,yao2023spike,lv2025spikebert,bal2024spikingbert,zhu2023spikegpt,xing2024spikelm,xing2024spikellm}. BiSpikCLM combines causal spiking attention with spike-aware alignment, while winner-take-all spiking Transformers provide another causal-attention design \citep{guo2026bispikclm,zhou2026winner}. These systems obtain capable SNN checkpoints from fixed corpus prefixes but do not address prefix-source mismatch during autoregressive sampling.

\paragraph{On-policy language-model distillation.}
Sequence-level distillation, scheduled sampling, and DAgger motivate learning beyond fixed corpus prefixes \citep{kim2016sequence,bengio2015scheduled,ross2011reduction}. Recent methods vary the divergence and rollout source: MiniLLM uses reverse KL, DistiLLM mixes on- and off-policy samples, and generalized OPD applies teacher losses to student generations \citep{gu2024minillm,ko2024distillm,agarwal2024policy}. EOPD, Lightning OPD, and OPRD add entropy-aware selection, stored trajectories, or representation alignment \citep{jin2026entropy,wu2026lightning,yang2026oprd}. More broadly, KL-controlled policy updates limit departure from a reference behavior in policy optimization and language-model alignment \citep{schulman2015trust,ouyang2022training}. These objectives and diagnostics remain mainly token- or representation-level, whereas SNN updates can also alter spike activity and membrane trajectories. This distinction motivates diagnosing internal spiking-dynamics drift separately from output-policy mismatch.

\section{Empirical Analysis}

\subsection{Prefix-Source Mismatch in Knowledge Distillation}

RQ1 pairs each self-generated prefix from the KD checkpoint with a corpus prefix matched by prompt, position, and length. Figures~\ref{fig:teacher_kl} and \ref{fig:dynamics} summarize the output and internal effects. All methods use the same fixed prefix bank.

\begin{figure}[!htbp]
    \centering
    \includegraphics[width=1.0\columnwidth]{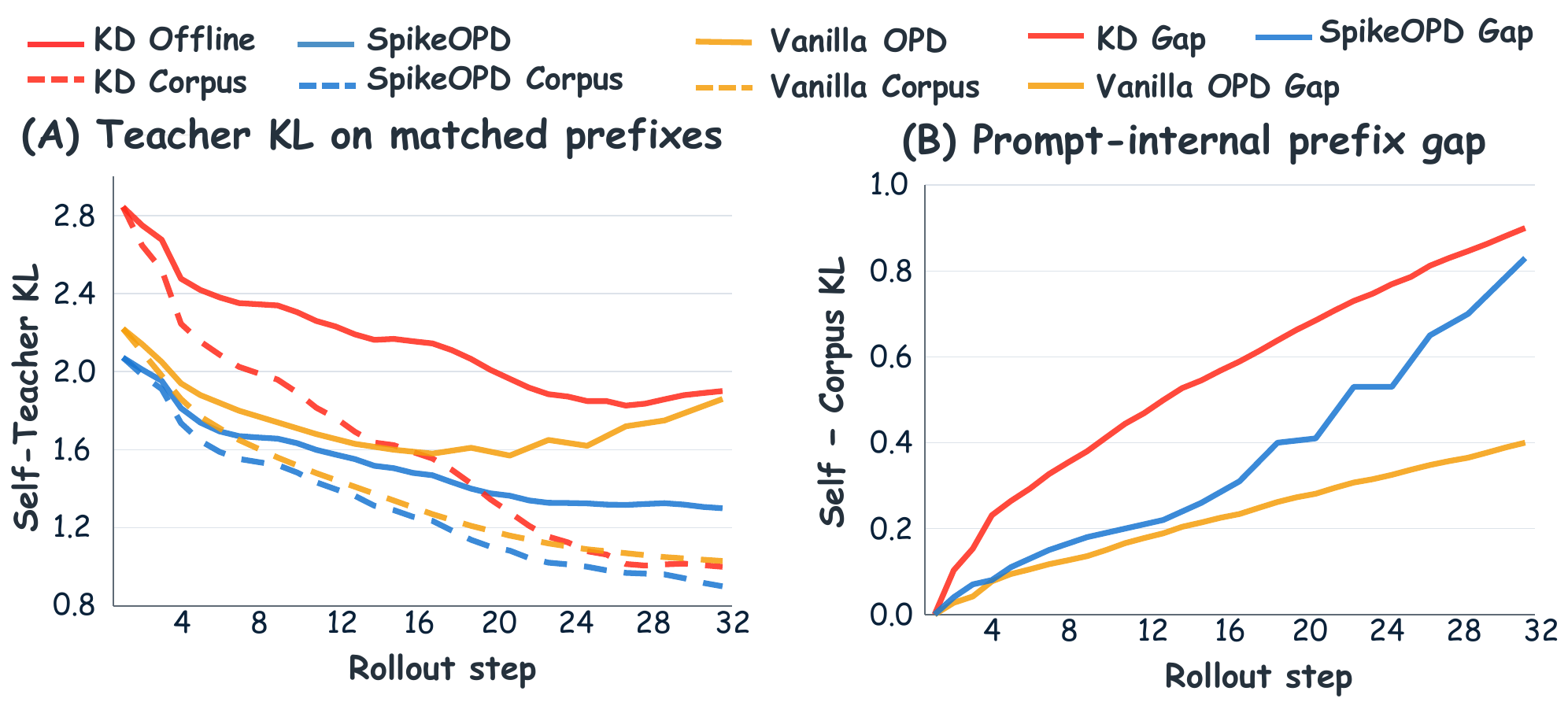}
    \caption{Paired-prefix output-policy diagnostic on 256 prompts. (A) Teacher-to-SNN KL on self-generated (solid) and corpus (dashed) prefixes. (B) Prompt-internal Self$-$Corpus gap.}
    \label{fig:teacher_kl}
\end{figure}

\begin{figure*}[!t]
    \centering
    \includegraphics[width=\textwidth]{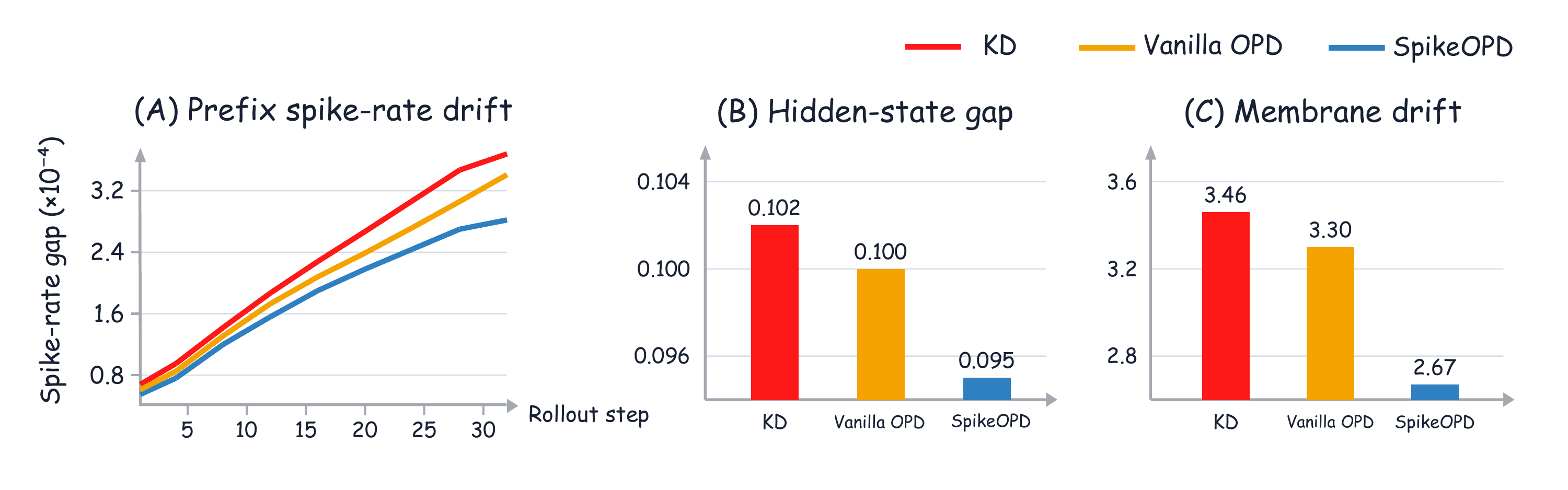}
    \caption{Internal effects under matched prompts, positions, and lengths: (A) spike-rate gap, (B) hidden-state gap, and (C) aggregate membrane-potential gap. Smaller is better.}
    \label{fig:dynamics}
\end{figure*}

\paragraph{\mbox{Output-policy mismatch.}}
For prompt $p$, position $t$, checkpoint $c$, and source $a\in\{\mathrm{self},\mathrm{corp}\}$, output-policy mismatch is the full-vocabulary teacher-to-SNN KL
\begin{equation}
\kappa_{c,p,t}^{a}=\sum_{v\in\mathcal V}p_T(v\mid s_{p,t}^{a})
\log\frac{p_T(v\mid s_{p,t}^{a})}{q_c(v\mid s_{p,t}^{a})}.
\label{eq:diagnostic_kl}
\end{equation}
Figure~\ref{fig:teacher_kl}(A) shows higher KL for KD than \method{} on both sources. Because corpus KL also varies with position, we isolate the paired effect with
\begin{equation}
G_c(t)=\frac{1}{N}\sum_{p=1}^{N}
\left(\kappa_{c,p,t}^{\mathrm{self}}-\kappa_{c,p,t}^{\mathrm{corp}}\right).
\label{eq:prefix_gap}
\end{equation}
In Figure~\ref{fig:teacher_kl}(B), $G_{\mathrm{KD}}(t)$ grows from near zero to about $0.9$, whereas \method{} remains smaller. Thus \method{} reduces both absolute teacher KL and the paired prefix-source effect.

\paragraph{Internal spiking-dynamics drift.}
For layer $l$ with $N_l$ neurons, we define the firing rate and the within-checkpoint prefix-source gap as
\begin{equation}
\begin{aligned}
r_{p,t,l}^{c,a}
&=\frac{1}{T_sN_l}\sum_{\tau,i}s_{p,t,l,i,\tau}^{c,a},\\
D_{\mathrm{spk}}(c,t)
&=\frac{1}{NL}\sum_{p,l}
\left|r_{p,t,l}^{c,\mathrm{self}}-r_{p,t,l}^{c,\mathrm{corp}}\right|.
\end{aligned}
\label{eq:spike_drift}
\end{equation}
The KD gap grows with rollout position. Vanilla OPD reduces it, and \method{} remains lower throughout, as shown in panel A of Figure~\ref{fig:dynamics}. Aggregated $L_1$ hidden-state and membrane-potential gaps fall from $0.102$ and $3.46$ for KD to $0.100$ and $3.30$ for Vanilla OPD, then to $0.095$ and $2.67$ for \method{}, as shown in panels B and C.

These controlled results answer RQ1 and motivate supervising both output behavior and SNN-specific event dynamics. We next ask whether teacher-only full-KL OPD can do so without destabilizing later rollouts.
\subsection{Vanilla OPD Is Insufficient for Stable Distillation}

Starting from the same KD checkpoint, Vanilla OPD applies only full-KL teacher supervision to self-generated prefixes. Figure~\ref{fig:teacher_kl} shows a late warning: its Self$-$Corpus gap accelerates toward KD, whereas \method{} remains lower.

Figure~\ref{fig:dynamics} provides the complementary internal view, where Vanilla OPD retains greater internal drift than \method{} in all three diagnostics under matched prefixes.
Its hidden-state and membrane-potential gaps remain at $0.100$ and $3.30$, above the $0.095$ and $2.67$ achieved by \method{}.

We stress-test 10 matched seeds using adjacent repetition and $1-\text{Dist.-4}$. Collapse begins when their 20-update means exceed $0.10$ and $0.05$, respectively. 

\begin{table}[!t]
\centering
{
\small
\renewcommand{\arraystretch}{1.2}
\setlength{\tabcolsep}{1pt}
\rowcolors{2}{tablebody}{white}
\begin{tabular}{@{}lccccc|ccc@{}}
\tabletoprule
\rowcolor{tableheader}
\tablesingleheadpad
Method & LR & Clip & Step & C/T & Onset
& Avg.\,$\uparrow$
& Adj.\,$\downarrow$
& Dist.-4\,$\uparrow$ \\
\tablemidrule

Vanilla OPD
& $\eta_0$ & - & 500 & $10/10$ & 168.5
& 34.8 & 39.1 & 83.7 \\

Small-LR
& $\eta_0/3$ & - & 1000 & $10/10$ & 527
& 34.8 & 36.0 & 85.5 \\

Clipped
& $\eta_0$ & 1.0 & 500 & $10/10$ & 327
& 34.5 & 31.5 & 87.8 \\

\rowcolor{tableours}
\method{}
& $\eta_0$ & - & 1000 & $0/10$ & -
& \textbf{35.5} & \textbf{0.1} & \textbf{99.2} \\

\tablebottomrule
\end{tabular}
}
\caption{Stability controls from one KD checkpoint. C/T reports collapsed/tested runs, and Onset averages first-collapse updates. Collapse uses 20-update Adj. and $1-\text{Dist.-4}$ thresholds of 0.10 and 0.05. Avg. is the last pre-onset eight-task average or final average.}
\label{tab:opd_multiseed}
\end{table}

\begin{figure}[!b]
    \centering
    \includegraphics[width=\columnwidth]{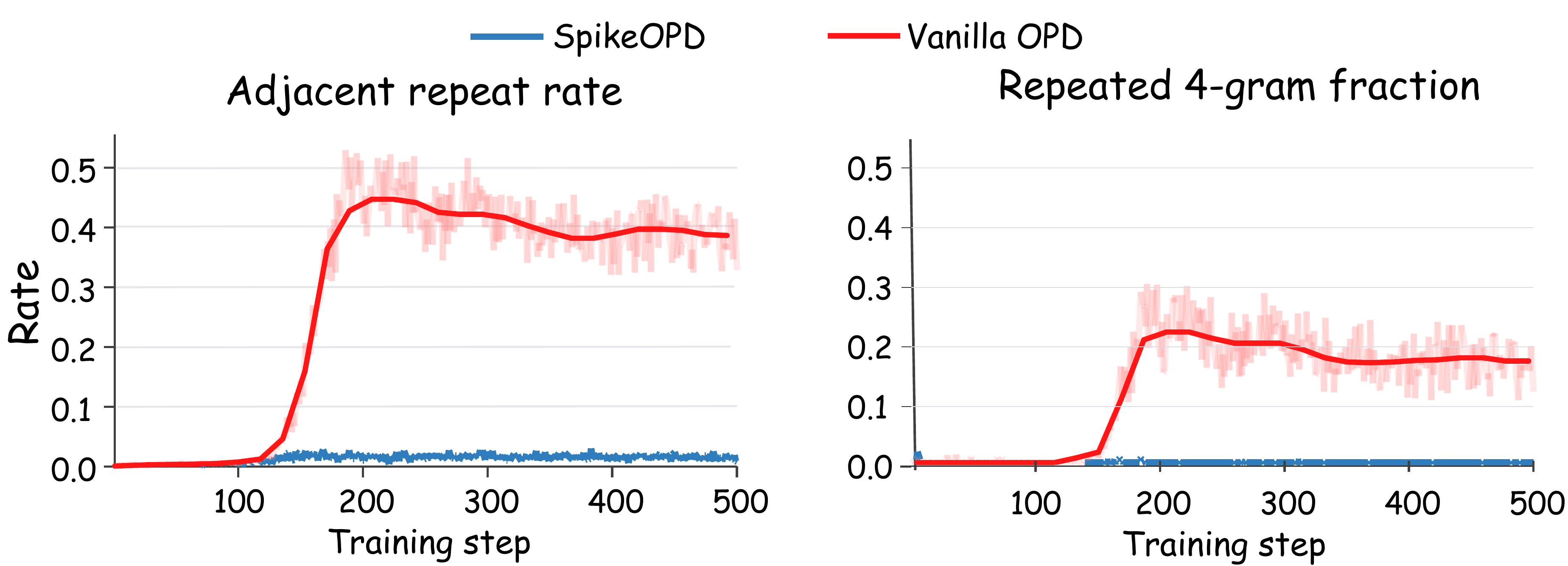}
    \caption{Delayed rollout-feedback collapse in one of 10 matched trajectories. Adjacent repetition measures equal neighboring-token pairs, while $1-\text{Dist.-4}$ measures repeated contiguous four-token sequences. Vanilla OPD becomes repetitive, whereas \method{} remains stable. Lower is better.}
    \label{fig:opd_collapse}
\end{figure}

\begin{figure*}[!t]
    \centering
    \includegraphics[width=\textwidth]{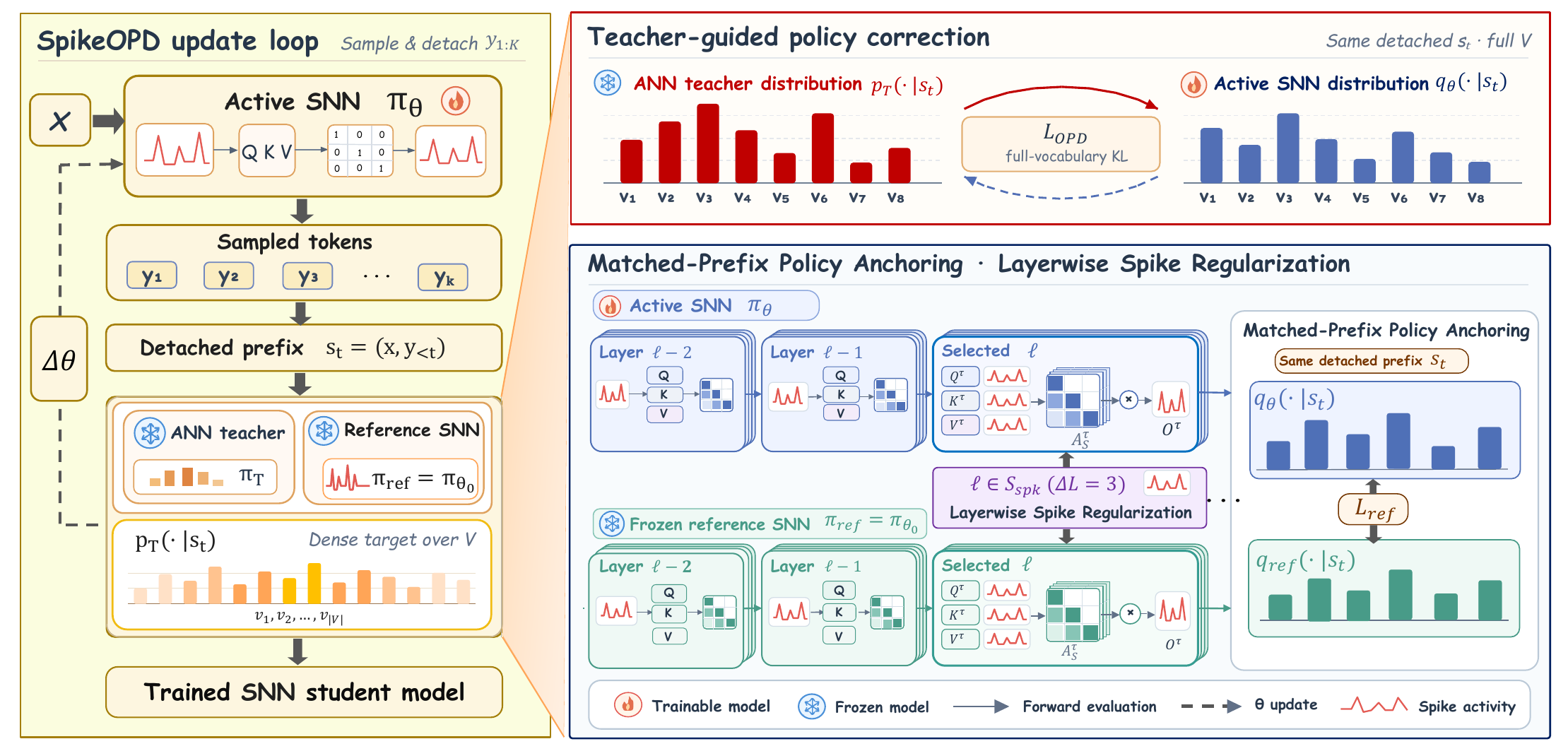}
    \caption{Overview of \method{}. On the same detached self-generated prefixes, teacher correction transfers the ANN distribution, policy anchoring constrains departure from the frozen reference SNN, and spike regularization controls selected-layer firing rates. Only $\theta$ is updated.}
    \label{fig:framework}
\end{figure*}

Figure~\ref{fig:opd_collapse} shows the delay: early updates remain close to KD, but repeated text then enters later training contexts and shifts the rollout distribution. All 10 Vanilla OPD seeds eventually collapse.

Table~\ref{tab:opd_multiseed} shows that Small-LR and clipping only postpone collapse to updates 527 and 327. Later onset is not stability over the full budget. \method{} remains stable in all 10 runs. Their failure motivates a frozen reference SNN on the same self-generated prefixes, coupled with teacher correction and spike regularization.

\section{Method}

\subsection{Preliminaries}

\paragraph{Spiking neurons and rate coding.}
The causal student follows the decoder-only \bispikclm{} architecture \citep{guo2026bispikclm}. A leaky integrate-and-fire neuron updates as
\begin{equation}
u_i^{\tau}=\alpha u_i^{\tau-1}+I_i^{\tau}-V_{\mathrm{th}}s_i^{\tau-1},
\qquad s_i^{\tau}=\mathbf{1}[u_i^{\tau}\ge V_{\mathrm{th}}].
\label{eq:lif}
\end{equation}
Here $u_i^{\tau}$, $I_i^{\tau}$, and $s_i^{\tau}$ denote the membrane potential, input current, and binary spike indicator, while $\alpha$ and $V_{\mathrm{th}}$ are the leak factor and firing threshold. The next computation receives the rate code $\bar h=T_s^{-1}\sum_\tau s^\tau$. The thresholding operation is trained with surrogate gradients \citep{neftci2019surrogate}.

\paragraph{Softmax-free causal spiking attention.}
For spike-coded $H^\tau$, \bispikclm{} forms binary queries, keys, and values:
\begin{equation}
\begin{aligned}
Q^\tau&=\mathcal S(H^\tau W_Q),\quad K^\tau=\mathcal S(H^\tau W_K),\\
V^\tau&=\mathcal S(H^\tau W_V),\\
A_S^\tau&=\mathcal S\!\left(Q^\tau(K^\tau)^\top/\sqrt d\right)\odot M_{\mathrm{causal}},\\
O^\tau&=A_S^\tau V^\tau.
\end{aligned}
\label{eq:spiking_attention}
\end{equation}
$\mathcal S$ denotes the spike function, and $M_{\mathrm{causal}}$ preserves autoregressive factorization. Binary $A_S^\tau$ routes computation through spike events. The same events determine layerwise activity, leaving firing rates free to drift under output-only matching.

\subsection{Overview of \method{}}

Offline KD creates prefix-source mismatch when an SNN conditions on self-generated rather than corpus prefixes, while direct on-policy correction can destabilize later rollouts. To address both issues, \method{} adapts a KD checkpoint using teacher-guided policy correction, matched-prefix policy anchoring, and layerwise spike regularization. As shown in Figure~\ref{fig:framework}, these components jointly correct the output policy, constrain policy departure, and stabilize internal spiking dynamics.

\subsection{Teacher-Guided Policy Correction}

Prefix-source mismatch places the SNN on prefixes absent from its offline corpus. At each update, the active SNN $\pi_\theta$ samples $K$ tokens and forms prefixes $s_t=(x,y_{<t})$. The frozen ANN teacher $\pi_T$ evaluates the same prefixes, giving $p_T(v\mid s_t)=\pi_T(v\mid s_t)$ and $q_\theta(v\mid s_t)=\pi_\theta(v\mid s_t)$. Sampled token identities are detached, while gradients pass through $q_\theta$ and the active-SNN spike paths. We transfer the teacher's full distribution with
\begin{equation}
\loss_{\mathrm{OPD}}=\frac{1}{K}\sum_{t=1}^{K}
\KL\!\left(p_T(\cdot\mid s_t)\,\|\,q_\theta(\cdot\mid s_t)\right).
\label{eq:opd}
\end{equation}
The logit gradient $q_\theta-p_T$ corrects every vocabulary coordinate and retains teacher uncertainty. It provides dense supervision on self-generated prefixes that are rare offline. Appendix~C gives the full-vocabulary objective and its gradient properties.

Full-KL coverage exposes the teacher on self-generated prefixes, but early corrections can redirect later rollouts. Vanilla OPD's delayed collapse in the controlled stress test therefore motivates an explicit policy-movement constraint.

\subsection{Matched-Prefix Policy Anchoring}

The reference SNN is a frozen copy of the initial KD checkpoint, $\pi_{\mathrm{ref}}=\pi_{\theta_0}$. Because corpus prefixes do not capture where the active policy changes, the active and reference SNNs evaluate identical self-generated prefixes. This matched-prefix design supplies a stability target that preserves transferred language ability and sparse execution, giving $q_{\mathrm{ref}}(v\mid s_t)=\pi_{\mathrm{ref}}(v\mid s_t)$ and
\begin{equation}
\loss_{\mathrm{ref}}=\frac{1}{K}\sum_{t=1}^{K}
\KL\!\left(q_\theta(\cdot\mid s_t)\,\|\,q_{\mathrm{ref}}(\cdot\mid s_t)\right).
\label{eq:reference}
\end{equation}
Here, $\loss_{\mathrm{OPD}}$ (see Eq.~\eqref{eq:opd}) aligns the active SNN with the ANN teacher, while $\loss_{\mathrm{ref}}$ anchors it to the frozen SNN on the identical self-generated prefixes without blocking teacher-guided correction.

\newcommand{\mainresultstable}{%
\begin{table*}[!t]
\centering
{
\small
\rowcolors{3}{tablebody}{white}
\begin{tabular}{llccccccccc|ccc}
\tabletoprule

\rowcolor{tableheader}
\rule{0pt}{2.8ex}
& &
\multicolumn{9}{c|}{\textbf{Zero-shot Accuracy (\%)}} &
\multicolumn{3}{c}{\textbf{Analytical Efficiency}} \\

\noalign{%
  \global\aboverulesep=0pt
  \global\belowrulesep=0pt
}
\cmidrule[0.1pt](lr){3-11}
\cmidrule[0.1pt](lr){12-14}
\noalign{%
  \global\aboverulesep=.4ex
  \global\belowrulesep=.65ex
}

\rowcolor{tableheader}
\rule{0pt}{2.8ex}
Model & Scale & ARC-e & ARC-c & WG & BQ & PIQA & HS &
OBQA & HQA & Avg.\smash{$\uparrow$}
& OPs\smash{$\downarrow$}
& Rate\smash{$\downarrow$}
& Energy\smash{$\downarrow$} \\

\tablemidrule
OPT Teacher & 0.125B & 43.4 & 19.3 & 52.4 & 54.2 & 62.4 & 31.9 & 20.3 & 23.5 & 38.4 & 125.6 & NA & 126.0 \\
KD & 0.125B & 37.1 & 19.1 & 51.2 & 47.8 & 53.8 & 27.6 & 18.4 & 22.2 & 34.7 & 27.7 & 0.17 & 9.3 \\
SpikeGPT & 0.046B & 32.3 & 16.2 & 50.2 & 45.7 & 54.6 & 25.3 & 15.7 & 20.6 & 32.6 & \textbf{3.7} & 0.17 & \textbf{3.3} \\
SpikeGPT & 0.216B & 35.2 & 17.7 & 50.7 & 47.3 & 55.1 & 27.6 & 17.3 & 23.1 & 34.3 & 18.3 & 0.17 & 16.5 \\
\rowcolor{tableours}
\method{} & 0.125B & \textbf{39.1} & \textbf{19.2} & \textbf{51.9} & \textbf{48.5} & \textbf{55.3} & \textbf{28.1} & \textbf{19.0} & \textbf{22.6} & $\mathbf{35.5}$ & 27.5 & $0.17$ & 9.2 \\
\midrule
OPT Teacher & 0.35B & 47.5 & 22.2 & 55.3 & 57.2 & 66.1 & 40.7 & 25.7 & 26.6 & 42.7 & 360.8 & N/A & 197.6 \\
KD & 0.35B & 42.1 & 20.7 & 50.8 & 55.3 & 60.5 & 32.8 & \textbf{21.8} & 22.3 & 38.3 & 85.7 & 0.18 & 16.9 \\
\rowcolor{tableours}
\method{} & 0.35B & \textbf{44.2} & \textbf{20.9} & \textbf{56.1} & \textbf{55.7} & \textbf{62.2} & \textbf{35.0} & \textbf{21.8} & \textbf{24.2} & $\mathbf{40.0}$ & \textbf{83.7} & $0.17$ & \textbf{16.7} \\
\midrule
OPT Teacher & 1.3B & 57.4 & 30.6 & 60.4 & 60.5 & 71.7 & 52.8 & 33.2 & 30.7 & 49.7 & 1237.1 & N/A & 632.2 \\
KD & 1.3B & 44.5 & 23.6 & 55.2 & 56.9 & 62.8 & 40.8 & 25.2 & 22.3 & 41.4 & 130.7 & 0.18 & 67.3 \\
\rowcolor{tableours}
\method{} & 1.3B & \textbf{48.9} & \textbf{25.6} & \textbf{58.3} & \textbf{57.4} & \textbf{65.5} & \textbf{46.3} & \textbf{27.6} & \textbf{24.6} & $\mathbf{44.3}$ & \textbf{128.3} & 0.19 & \textbf{66.8} \\
\tablebottomrule
\end{tabular}
}
\caption{Zero-shot accuracy and analytical efficiency. For \method{}, all task accuracies, Avg., and Rate are three-seed means; the corresponding sample standard deviations for Avg. and Rate are reported in the text. Avg. is unweighted. OPs and Energy are reported in G and mJ. Bold marks the best matched-scale SNN accuracy, including ties.}
\label{tab:main}
\end{table*}
}

\subsection{Layerwise Spike Regularization}

Matched-prefix policy anchoring constrains output behavior, while layerwise spike activity remains an independent degree of freedom. Layers can compensate to preserve logits while some become silent or overly active. We therefore regularize selected layers rather than a network-wide spike count. For $B$ trajectories, layer $l$ fires at rate
\begin{equation}
r_l=\frac{1}{BKT_sN_l}\sum_{b,t,\tau,i}s_{b,l,i}^{t,\tau}.
\label{eq:firing_rate}
\end{equation}
Let $r_l^{\mathrm{ref}}$ be the rate of the frozen reference SNN on the same prefixes. We use the fixed layer set $\mathcal S_{\mathrm{spk}}=\{3,6,9,12\}$ at all three model scales and penalize deviations both from a viable interval and from the frozen reference SNN rate:
\begin{equation}
\begin{aligned}
\loss_{\mathrm{spk}}=\frac{1}{|\mathcal S_{\mathrm{spk}}|}
\sum_{l\in\mathcal S_{\mathrm{spk}}}\big(&
\pospart{r_{\min}-r_l}^{2}
+\pospart{r_l-r_{\max}}^{2}\\
&+\rho(r_l-r_l^{\mathrm{ref}})^2\big).
\end{aligned}
\label{eq:spike}
\end{equation}
The interval discourages silent or saturated selected layers, even if the reference approaches a boundary. The reference term penalizes deviations from the firing rates of the frozen reference SNN on self-generated prefixes. Per-layer penalties prevent increases and decreases from cancelling globally. We set $\rho=1$ and use the same frozen reference SNN used for matched-prefix policy anchoring.

\subsection{Joint Training Objective}

Because $\loss_{\mathrm{OPD}}$ changes the active policy, it also changes the prefixes and internal rates observed at the next update. We therefore evaluate all objectives on the same fresh prefix batch:
\begin{equation}
\loss_{\mathrm{total}}
=
\loss_{\mathrm{OPD}}
+\beta_{\mathrm{ref}}\loss_{\mathrm{ref}}
+\lambda_{\mathrm{spk}}\loss_{\mathrm{spk}}.
\label{eq:total}
\end{equation}
$\loss_{\mathrm{OPD}}$ corrects the output policy, $\loss_{\mathrm{ref}}$ limits departure from the reference policy, and $\loss_{\mathrm{spk}}$ constrains selected-layer firing dynamics. Only the active SNN parameters $\theta$ are updated; the ANN teacher and reference SNN remain frozen. We keep $\beta_{\mathrm{ref}}$ and $\lambda_{\mathrm{spk}}$ fixed throughout \method{} adaptation. Appendix~B gives the complete update sequence, and Appendix~D derives the gradients of both stability terms.

After training, the teacher and reference SNN are discarded. Deployment uses only the updated SNN, without an auxiliary model, reranker, or second decoding pass.

\mainresultstable

\section{Experiments}

\subsection{Experimental Setup}

\paragraph{Benchmark Datasets.}
We evaluate retained language ability on eight zero-shot benchmarks: ARC-Easy/Challenge (ARC-e/ARC-c) \citep{clark2018think}, WinoGrande (WG) \citep{sakaguchi2020winogrande}, BoolQ (BQ) \citep{clark2019boolq}, PIQA \citep{bisk2020piqa}, HellaSwag (HS) \citep{zellers2019hellaswag}, OpenBookQA (OBQA) \citep{mihaylov2018can}, and HeadQA (HQA) \citep{vilares2019head}. \method{} sees no examples from these tasks. We report each accuracy and their unweighted mean.

\paragraph{Compared Baselines.}
All controlled 0.125B baselines share one KD checkpoint and update budget. We compare continued \spad{}, hard-label SFT, Vanilla OPD, and the on-policy EOPD, Lightning OPD, and OPRD variants \citep{jin2026entropy,wu2026lightning,yang2026oprd}. The scale study reports OPT teachers \citep{zhang2022opt}, KD and \method{}-adapted \bispikclm{} models at 0.125B, 0.35B, and 1.3B, plus published SpikeGPT checkpoints \citep{zhu2023spikegpt}.

\paragraph{Evaluation Metrics.}
We report accuracy on each of the eight zero-shot benchmarks and their unweighted average to measure retained language capability. To assess generation stability under the same decoding setting, we use adjacent repetition (Adj.), the fraction of unique four-token sequences (Dist.-4), and the maximum identical-token run length (Run) \citep{holtzman2019curious,welleck2019neural}. Lower Adj. and Run indicate fewer local repetitions and severe degeneration, whereas higher Dist.-4 indicates greater sequence diversity; jointly, they distinguish isolated repetition from broader rollout collapse. Fire Rate follows Eq.~\ref{eq:firing_rate} and measures the sparse activation level of the SNN. Operations and energy are analytical estimates computed using the matched \bispikclm{} protocol \citep{guo2026bispikclm}, rather than hardware latency or power measurements; Appendix~E specifies the calculation boundary, formulas, and operation-energy constants.

\paragraph{Implementation Details.}
\method{} starts from a 4,000-update \spad{} checkpoint and 480-token FineWeb prompts \citep{penedo2024fineweb}, generating $K=32$ tokens at temperature 1.0 with $T_s=4$. We train for 500 updates using $(1,\beta_{\mathrm{ref}},\lambda_{\mathrm{spk}})=(1,0.75,0.3)$ and $\mathcal S_{\mathrm{spk}}=\{3,6,9,12\}$. Controlled experiments use 0.125B models, and the scale study adds 0.35B and 1.3B. Scale results use three seeds, the stability test uses 10 matched seeds, and paired-prefix diagnostics use $N=256$ fixed pairs. Remaining implementation and reproducibility details are in the appendix.

\begin{table*}[!t]
\centering
{
\small
\rowcolors{3}{tablebody}{white}
\begin{tabular}{lccccccccc|ccc}
\tabletoprule

\rowcolor{tableheader}
\rule{0pt}{2.8ex}
& \multicolumn{9}{c|}{\textbf{Zero-shot Accuracy (\%)}}
& \multicolumn{3}{c}{\textbf{Rollout Statistics}} \\

\noalign{%
  \global\aboverulesep=0pt
  \global\belowrulesep=0pt
}
\cmidrule[0.1pt](lr){2-10}
\cmidrule[0.1pt](lr){11-13}
\noalign{%
  \global\aboverulesep=.4ex
  \global\belowrulesep=.65ex
}

\rowcolor{tableheader}
\rule{0pt}{2.8ex}
Method & ARC-e & ARC-c & WG & BQ & PIQA & HS & OBQA & HQA & Avg.
& Adj.\,\smash{$\downarrow$}
& Dist.-4\,\smash{$\uparrow$}
& Run\,\smash{$\downarrow$} \\

\tablemidrule
KD & 37.1 & 19.1 & 51.2 & 47.8 & 53.8 & 27.6 & 18.4 & 22.2 & 34.7 & 0.16 & \textbf{99.83} & 1.0433 \\
Continued \spad{} & 37.3 & 19.1 & 49.6 & 48.0 & 54.2 & 27.3 & 18.4 & 22.0 & 34.5 & 0.16 & 99.77 & 1.0457 \\
SFT & 36.8 & 16.9 & 50.6 & 38.4 & 49.1 & 25.9 & 20.0 & 18.8 & 32.1 & \textbf{0.09} & 7.28 & \textbf{1.0275} \\
Vanilla OPD & 36.0 & 17.0 & 49.0 & 38.4 & 49.9 & 26.1 & \textbf{22.0} & 20.3 & 32.3 & 39.09 & 83.67 & 6.9041 \\
EOPD & 37.3 & 17.5 & 51.3 & 46.2 & 54.1 & 27.3 & 21.2 & 21.7 & 34.6 & 24.77 & 87.21 & 5.8413 \\
Lightning OPD & 37.1 & \textbf{19.2} & 50.1 & 45.8 & 51.5 & 26.8 & 21.1 & 21.7 & 34.2 & 0.13 & 99.23 & 1.0513 \\
OPRD & 36.8 & 18.7 & 48.9 & 46.2 & 51.3 & 26.7 & 21.6 & 22.0 & 34.0 & 0.16 & 99.72 & 1.0836 \\
\rowcolor{tableours}
\method{} & \textbf{39.1} & \textbf{19.2} & \textbf{51.9} & \textbf{48.5} & \textbf{55.3} & \textbf{28.1} & 19.0 & \textbf{22.6} & \textbf{35.5} & 0.11 & 99.19 & 1.0477 \\
\tablebottomrule
\end{tabular}
}
\caption{Controlled 0.125B comparison from one checkpoint and 500 updates. Adj., Dist.-4, and Run denote adjacent repetition (\%), unique four-token fraction (\%), and maximum identical-token run length. Bold marks the best accuracy, including ties.}
\label{tab:controlled}
\end{table*}

\subsection{Performance Comparison with KD Baselines}

We test whether on-policy adaptation scales without sacrificing sparse execution by comparing \method{} with matched KD checkpoints at 0.125B, 0.35B, and 1.3B. For \method{}, Table~\ref{tab:main} reports three-run means for every task accuracy, the unweighted eight-task average, and firing rate. The eight-task averages are $35.5\pm0.02\%$, $40.0\pm0.19\%$, and $44.3\pm0.12\%$, and the firing rates are $0.17\pm0.006$, $0.17\pm0.000$, and $0.19\pm0.010$, respectively, where each uncertainty is the sample standard deviation across the three runs.

The benefit is broad and becomes more pronounced over the tested scale range: \method{} improves the KD average by 0.8, 1.7, and 2.9 points. The three-run task means improve on every reported task at 0.125B and 1.3B; at 0.35B, seven improve and OpenBookQA is unchanged. These improvements require neither downstream-task training nor additional analytical compute; operation and energy estimates remain slightly below KD, while firing rates remain comparable. This pattern suggests that correcting prefix-source mismatch recovers language capability without compromising the sparse inference profile.

\subsection{Comparison with Post-Training Methods}

We compare post-training methods with different supervision and data sources under the same 0.125B KD checkpoint and 500-update budget, including continued offline training, hard-label SFT, and existing on-policy distillation methods. Table~\ref{tab:controlled} evaluates both task accuracy and rollout statistics to determine whether post-training gains coexist with stable generation.

The controlled comparison reveals that effective post-training requires more than additional updates or exposure to self-generated prefixes. Existing methods either preserve rollout statistics without improving accuracy or update on-policy and suffer generation degeneration. \method{} is the only method that raises the KD average from 34.7\% to 35.5\% while keeping rollout statistics close to the initial checkpoint. This result indicates that teacher correction must be coupled with explicit control of policy movement and internal spiking dynamics to break the observed accuracy--stability trade-off.

\subsection{Ablation Study}

We evaluate the three components of \method{} by individually removing teacher-guided policy correction, matched-prefix policy anchoring, or layerwise spike regularization from Eq.~\ref{eq:total}. All variants use the same 0.125B checkpoint, self-generated prefixes, optimization schedule, and update budget. Figure~\ref{fig:ablation} reports their accuracy across eight zero-shot benchmarks and the average.

\begin{figure}[!htbp]
    \centering
    \includegraphics[width=\columnwidth]{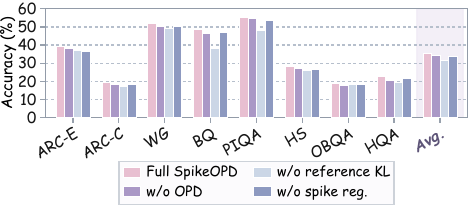}
    \caption{Component ablation at 0.125B. ``w/o reference KL'' removes matched-prefix policy anchoring. The other labels remove the named term in Eq.~\ref{eq:total}.}
    \label{fig:ablation}
\end{figure}

The full configuration achieves the highest average accuracy of 35.5\%, while removing any component degrades performance. \textbf{Obs.~1:} Teacher-guided correction provides the optimization direction. Without $\mathcal{L}_{\mathrm{OPD}}$, the preservation objectives can restrict policy changes but cannot transfer the ANN teacher's distribution on self-generated prefixes, leaving prefix-source mismatch insufficiently corrected. \textbf{Obs.~2:} Matched-prefix policy anchoring is the primary stability constraint. Removing $\mathcal{L}_{\mathrm{ref}}$ produces the largest decline, from 35.5\% to 31.7\%, showing that teacher supervision alone cannot ensure stable adaptation. Since policy updates also alter subsequent prefixes, unconstrained movement can amplify early deviations. The frozen reference limits this feedback on the same visited prefixes without blocking teacher correction. \textbf{Obs.~3:} Spike regularization provides a complementary internal constraint. Its removal reduces accuracy because output alignment cannot fully preserve layerwise spiking behavior, which may drift even when similar logits are maintained. This result is consistent with Figure~\ref{fig:dynamics}(A). Overall, the three objectives jointly control the correction direction, policy movement, and internal spiking dynamics.

\subsection{Initialization Robustness}

According to Figure~\ref{fig:initialization}, we test whether \method{} depends on a mature offline checkpoint by adapting models after 1,000, 2,000, and 4,000 \spad{} updates and comparing each result with its matched initialization.

\begin{figure}[!htbp]
    \centering
    \includegraphics[width=\columnwidth]{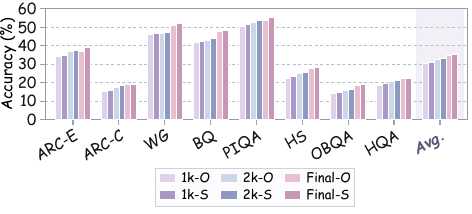}
    \caption{Initialization robustness across 1k, 2k, and final 4k offline checkpoints. O and S denote the matched KD and \method{} results. All three pairs improve.}
    \label{fig:initialization}
\end{figure}

\method{} improves all three matched checkpoints by 0.6--0.8 points, although initialization maturity still determines absolute performance. The consistent relative gain shows that \method{} complements rather than replaces spike-aware offline alignment: offline training establishes language capability, whereas on-policy adaptation addresses the prefix-source mismatch that appears when the model conditions on its own generations.

\subsection{Sensitivity Analysis}

We study two hyperparameters that control adaptation strength: rollout length $K$ determines the extent of self-generated context, while the layer interval $\Delta L$ determines the coverage of internal spike constraints. Tables~\ref{tab:sensitivity_k} and~\ref{tab:sensitivity_layers} vary them at 0.125B under a fixed update budget and report both accuracy and rollout quality.

\begin{table}[!htbp]
\centering
{
\small
\rowcolors{2}{tablebody}{white}
\begin{tabular}{lccccc}
\tabletoprule
\rowcolor{tableheader}
\tablesingleheadpad $K$ & Upd. & Gen./upd. & Avg. $\uparrow$ & Adj. $\downarrow$ & Dist.-4 $\uparrow$ \\
\tablemidrule
8 & 500 & $8B$ & 34.9 & 0.2 & 99.0 \\
16 & 500 & $16B$ & 35.3 & \textbf{0.1} & \textbf{99.3} \\
\rowcolor{tableours}
\textbf{32} & 500 & $32B$ & $\mathbf{35.5}$ & \textbf{0.1} & 99.2 \\
64 & 500 & $64B$ & 35.2 & 0.2 & 99.1 \\
128 & 500 & $128B$ & 34.8 & 0.3 & 98.8 \\
\tablebottomrule
\end{tabular}
}
\caption{Rollout-length sensitivity at 0.125B. Metrics are percentages. The shaded default uses three seeds and other rows use one fixed seed.}
\label{tab:sensitivity_k}
\end{table}

\begin{table}[!htbp]
\centering
{
\small
\rowcolors{2}{tablebody}{white}
\begin{tabular}{lcccc}
\tabletoprule
\rowcolor{tableheader}
\tablesingleheadpad Layers & Avg. $\uparrow$ & Adj. $\downarrow$ & Dist.-4 $\uparrow$ & Rate \\
\tablemidrule
$\Delta L=1$ & 35.0 & 0.2 & 99.0 & 0.176 \\
$\Delta L=2$ & 35.3 & \textbf{0.1} & \textbf{99.3} & 0.171 \\
\rowcolor{tableours}
\textbf{$\Delta L=3$} & $\mathbf{35.5}$ & \textbf{0.1} & 99.2 & $0.173$ \\
$\Delta L=4$ & 35.2 & \textbf{0.1} & 99.1 & 0.168 \\
$\Delta L=6$ & 34.9 & 0.2 & 99.0 & 0.164 \\
$\Delta L=12$ & 34.6 & 0.3 & 98.8 & 0.158 \\
\tablebottomrule
\end{tabular}
}
\caption{Layer-selection sensitivity at 0.125B. None removes spike regularization. Metrics are percentages except Rate. The shaded default uses three seeds and other rows use one.}
\label{tab:sensitivity_layers}
\end{table}

The shared finding is that stable on-policy adaptation benefits from bounded self-generated context and selective internal constraints rather than maximal coverage. Short rollouts provide insufficient on-policy exposure, whereas excessively long rollouts may accumulate feedback from the evolving policy; similarly, dense spike constraints may suppress useful adaptation, while sparse constraints leave internal drift undercontrolled. The intermediate optima at $K=32$ and $\Delta L=3$ in Tables~\ref{tab:sensitivity_k} and~\ref{tab:sensitivity_layers} support this interpretation.

\section{Conclusion}
We propose SpikeOPD, which employs full KL teacher correction, prefix-matching strategy anchoring, and layer-wise spike regularization on self-generated prefixes. Across three different scales, SpikeOPD improves knowledge distillation performance by 0.8, 1.7, and 2.9 percentage points, respectively, without increasing analytical sparse computation overhead or requiring auxiliary models at deployment time. This study offers a novel technical pathway for autoregressive SNN distillation.

\bibliography{references}

\clearpage
\appendix
\setcounter{secnumdepth}{1}
\setcounter{section}{0}
\setcounter{figure}{0}
\setcounter{table}{0}
\setcounter{equation}{0}

\section{Overview and Notation}
\label{app:overview}

This supplement provides the Stage~2 training protocol, the properties of full-vocabulary on-policy distillation, the two stability terms, the shared-prefix diagnostic protocol, and the analytical-efficiency totals reported in the main paper. It then gives an implementation-level record of the offline \spad{} initialization, including cross-domain attention and feature alignment, the exact Stage~1 optimization settings, and a reset-aware finite-horizon surrogate-gradient analysis. It uses the same symbols and reporting boundaries as the main paper.

Let $x\sim\mathcal{D}_{\mathrm{prompt}}$ be a corpus prompt and let $y_{1:K}$ be a continuation sampled from an active spiking student. The state at response position $t$ is $s_t=(x,y_{<t})$. The frozen ANN teacher, active SNN, and frozen offline SNN reference define distributions over the complete vocabulary $\V$:
\begin{equation}
\begin{aligned}
p_T(v\mid s_t)&=\operatorname{softmax}(z_T(s_t))_v,\\
q_\theta(v\mid s_t)&=\operatorname{softmax}(z_\theta(s_t))_v,\\
q_{\mathrm{ref}}(v\mid s_t)&=\operatorname{softmax}(z_{\mathrm{ref}}(s_t))_v.
\end{aligned}
\label{eq:app_distributions}
\end{equation}
The reference parameters equal the Stage~1 checkpoint and remain frozen. Rollout token identities are detached from the computation graph. Consequently, the sampled sequence chooses the states on which supervision is applied, while gradients at an update are computed with the visited prefixes held fixed.

\section{Detailed Experimental Protocol}
\label{app:protocol}

\subsection{Models, data, and two-stage training}

We study matched OPT teachers and causal \bispikclm{} students at 125M, 0.35B, and 1.3B parameters \citep{zhang2022opt,guo2026bispikclm}. Stage~1 uses the FineWeb-Edu subset specified in Section~\ref{app:stage1_optimization}; Stage~2 draws prompts from FineWeb \citep{penedo2024fineweb}. Stage~1 performs offline spike-aware ANN-to-SNN distillation for 4,000 updates. Stage~2 starts from that checkpoint, copies it once to form the frozen reference SNN, and updates only the active SNN.

At every Stage~2 update, we sample a batch of 480-token FineWeb prompts. The active SNN generates $K=32$ new tokens with sampling temperature $1.0$. The frozen teacher and frozen reference then score every visited prefix $s_t$ over the full vocabulary. The default accuracy and controlled-comparison runs use 500 updates, batch size 16, and learning rate $1.5\times10^{-6}$. The separate stability stress test follows the horizons reported in the main paper: 500 updates for Vanilla and Clipped OPD, and 1,000 updates for Small-LR OPD and \method{}. No example from any downstream benchmark is used for gradient-based optimization.

\begin{table}[t]
\centering
\small
\setlength{\tabcolsep}{4pt}
\begin{tabular}{ll}
\toprule
Setting & Value \\
\midrule
Stage~1 checkpoint & 4,000 updates \\
Prompt length & 480 tokens \\
Rollout length $K$ & 32 tokens \\
Rollout temperature & 1.0 \\
Simulation steps $T_s$ & 4 \\
Diagnostic prompt count & 256 \\
Default Stage~2 horizon & 500 updates \\
Stability-stress horizon & 500 or 1,000 updates \\
Global trajectory batch $B$ & 16 \\
Learning rate & $1.5\times10^{-6}$ \\
Teacher correction & Full-vocabulary forward KL \\
Loss weights & $1.00:0.75:0.30$ \\
Spike-rate interval & $[0.01,0.58]$ \\
Reference-rate coefficient $\rho$ & 1 \\
Default training layers & $\{3,6,9,12\}$ \\
Trajectory diagnostics & All $L$ spiking layers (125M only) \\
Trainable network & Active SNN only \\
\bottomrule
\end{tabular}
\caption{Stage~2 reproducibility record. The objective weights are fixed throughout each run, including the 1,000-update stability runs.}
\label{tab:app_reproducibility}
\end{table}

\subsection{Fixed Stage~2 objective}

For a batch of $B$ sampled trajectories, the complete teacher correction is
\begin{equation}
\loss_{\mathrm{OPD}}
=\frac{1}{BK}\sum_{b=1}^{B}\sum_{t=1}^{K}
\KL\!\left(p_T(\cdot\mid s_{b,t})\,\|\,
q_\theta(\cdot\mid s_{b,t})\right).
\label{eq:app_opd}
\end{equation}
The matched-prefix reference penalty is
\begin{equation}
\loss_{\mathrm{ref}}
=\frac{1}{BK}\sum_{b=1}^{B}\sum_{t=1}^{K}
\KL\!\left(q_\theta(\cdot\mid s_{b,t})\,\|\,
q_{\mathrm{ref}}(\cdot\mid s_{b,t})\right).
\label{eq:app_ref}
\end{equation}
For a batch of $B$ trajectories and each monitored layer $l\in\mathcal S_{\mathrm{spk}}$, let $s_{b,l,i}^{t,\tau}\in\{0,1\}$ be the spike of neuron $i$ in batch item $b$ at response position $t$ and simulation step $\tau$. Its batch-averaged rate is
\begin{equation}
r_l=\frac{1}{B K T_s N_l}\sum_{b=1}^{B}\sum_{t=1}^{K}\sum_{\tau=1}^{T_s}\sum_{i=1}^{N_l}s_{b,l,i}^{t,\tau},
\label{eq:app_rate}
\end{equation}
Let $r_l^{\mathrm{ref}}$ be the rate produced by the frozen reference on the same prefixes. The activity penalty is
\begin{equation}
\begin{aligned}
\loss_{\mathrm{spk}}
&=\frac{1}{|\mathcal S_{\mathrm{spk}}|}
\sum_{l\in\mathcal S_{\mathrm{spk}}}
\bigl(\pospart{r_{\min}-r_l}^{2}
+\pospart{r_l-r_{\max}}^{2}\\
&\hspace{8.2em}+\rho(r_l-r_l^{\mathrm{ref}})^2\bigr),
\end{aligned}
\label{eq:app_spike}
\end{equation}
where $\rho=1$ throughout. The default/full method and all non-layer-sensitivity experiments use $\mathcal S_{\mathrm{spk}}=\{3,6,9,12\}$ at every model scale. The layer-sensitivity study instead uses the set $\{\Delta L,2\Delta L,\ldots,12\}$ specified by each tested interval. The all-layer quantities used only for trajectory diagnosis in Section~\ref{app:protocol} are not the training regularizer. The outer Stage~2 objective is fixed for the entire run:
\begin{equation}
\boxed{
\loss_{\mathrm{total}}
=1.00\,\loss_{\mathrm{OPD}}
+0.75\,\loss_{\mathrm{ref}}
+0.30\,\loss_{\mathrm{spk}}.}
\label{eq:app_total}
\end{equation}
There is no weight schedule and \method{} does not replay a Stage~1 loss during Stage~2. Continued \spad{} is the sole diagnostic control that intentionally reuses the Stage~1 objective for 500 additional offline updates. Thus, the ratio $1:0.75:0.3$ always refers to full-KL teacher correction, matched-prefix policy anchoring, and layerwise spike regularization, respectively. The latter two terms form a coupled two-level stability module: $\loss_{\mathrm{ref}}$ constrains policy drift on matched prefixes, while $\loss_{\mathrm{spk}}$ constrains spike activity. Full \method{} always uses both terms together; single-term removals in the ablation are diagnostic only.

\subsection{Update sequence}

Each Stage~2 update executes the following sequence:
\begin{enumerate}
    \item Sample a batch of 480-token FineWeb prompts.
    \item Generate 32 tokens from the active SNN at temperature $1.0$ and detach the sampled token identities.
    \item Run the frozen ANN teacher, active SNN, and frozen reference SNN on the same visited prefixes.
    \item Compute the complete vocabulary sums in Eqs.~\ref{eq:app_opd} and~\ref{eq:app_ref}; no vocabulary truncation or token-level Monte Carlo estimator is used.
    \item Accumulate active and reference firing rates for the layer set $l\in\mathcal S_{\mathrm{spk}}$ of the current configuration and evaluate Eq.~\ref{eq:app_spike}; the default set is $\{3,6,9,12\}$.
    \item Form Eq.~\ref{eq:app_total}, backpropagate through the active SNN with surrogate gradients, and update only $\theta$.
\end{enumerate}
The teacher and reference are discarded after training. Inference therefore uses the same student architecture and introduces no auxiliary network or new module.

\subsection{Controlled baselines and evaluation}

All controlled 125M post-training comparisons start from the same 4,000-update checkpoint, use $T_s=4$, and receive the same 500-update budget. \emph{Offline \spad{}} evaluates the checkpoint without Stage~2. \emph{Continued \spad{}} applies the original offline objective for 500 additional FineWeb updates. \emph{SFT} performs standard causal-language-model fine-tuning on the same FineWeb sequences: at each corpus prefix, the ground-truth next token from that sequence is the hard target in a cross-entropy loss. It does not use tokens sampled from the active rollout as labels. Vanilla OPD applies full forward KL alone and removes the complete coupled two-level stability module; it omits both $\loss_{\mathrm{ref}}$ and $\loss_{\mathrm{spk}}$. \emph{\method{}} uses Eq.~\ref{eq:app_total}. SpikeGPT is reported only as external context and is not a training-matched control \citep{zhu2023spikegpt}.

The recent-method controls preserve the same prompt length, continuation length, temperature, and update budget. \emph{Lightning OPD} \citep{wu2026lightning} first generates the complete fixed trajectory buffer from the 4,000-update checkpoint and caches frozen-teacher full-vocabulary scores on those prefixes. Its 500 updates sample only this buffer, without refreshing trajectories as the SNN changes, and include the same selected-layer $\loss_{\mathrm{spk}}$ used by \method{}. \emph{OPRD} \citep{yang2026oprd} uses active-student rollouts and temporally averages SNN hidden states across all four simulation steps. A learned linear map aligns each of the $L$ student layers to its matched teacher layer with MSE; the representation coefficient is calibrated on the first five minibatches so its gradient norm is of the same order as the common full-KL calibration signal, then held fixed.

\emph{EOPD} \citep{jin2026entropy} uses the frozen teacher's token entropy to switch between reverse KL at low entropy and forward KL at high entropy. The high-entropy branch uses the teacher-probability Top-50 approximation, and the threshold is fixed to the 80th percentile of one teacher-scored calibration minibatch. For a fair SNN comparison, EOPD retains the same selected-layer $\loss_{\mathrm{spk}}$ as \method{} but does not use the matched-prefix reference-policy term. These controls isolate entropy-aware divergence switching, fixed replay, and representation supervision under a common SNN budget.

Zero-shot accuracy is evaluated on ARC-Easy and ARC-Challenge \citep{clark2018think}, WinoGrande \citep{sakaguchi2020winogrande}, BoolQ \citep{clark2019boolq}, PIQA \citep{bisk2020piqa}, HellaSwag \citep{zellers2019hellaswag}, OpenBookQA \citep{mihaylov2018can}, and HeadQA \citep{vilares2019head}. The scale study uses three independent runs per scale, and the main table reports three-run means for every \method{} task accuracy, the unweighted eight-task average, and firing rate. The sample standard deviations of the eight-task average are $0.02$, $0.19$, and $0.12$ percentage points at 125M, 0.35B, and 1.3B, respectively; the corresponding firing-rate standard deviations are $0.006$, $0.000$, and $0.010$. The collapse stress test uses the same 10 matched seeds for Vanilla OPD, Small-LR OPD, Clipped OPD, and \method{}. Unless a table explicitly reports a multi-seed aggregate, other controlled ablations and sensitivity settings are single runs. The downstream evaluation examples are not mixed into either training stage.

\subsection{Diagnostic definitions}

For a generated continuation $y_{1:K}$, adjacent repetition is
\begin{equation}
R_{\mathrm{adj}}(y)=\frac{1}{K-1}\sum_{t=2}^{K}\mathbf{1}[y_t=y_{t-1}].
\end{equation}
The maximum run is the length of the longest contiguous sequence of an identical token. Distinct-4 is the number of unique contiguous 4-grams divided by the total number $K-3$. The repeated-4-gram fraction used in the training trace is the fraction of positions whose current 4-gram already appeared earlier in the same continuation. These complementary statistics distinguish local token loops from broader phrase reuse; lower repetition and shorter runs are better, whereas higher distinct-4 is better \citep{holtzman2019curious,welleck2019neural}.

At rollout depth $t$, teacher--student KL is evaluated before the average over response positions in Eq.~\ref{eq:app_opd}. For checkpoint $c$, prompt $n$, and source $a\in\{\mathrm{self},\mathrm{corp}\}$, let
\begin{equation}
\begin{aligned}
\kappa_{c,n,t}^{a}
&=\KL\!\left(p_T(\cdot\mid s_{n,t}^{a})\,\|\,
q_c(\cdot\mid s_{n,t}^{a})\right),\\
G_c(t)
&=\frac{1}{N}\sum_{n=1}^{N}
\left(\kappa_{c,n,t}^{\mathrm{self}}
-\kappa_{c,n,t}^{\mathrm{corp}}\right).
\end{aligned}
\label{eq:app_diagnostic_kl}
\end{equation}
Absolute curves average $\kappa_{c,n,t}^{a}$ over the 256 prompts, while $G_c(t)$ is the paired source gap. Internal-dynamics diagnostics use the same self-generated prefix and length-matched corpus-prefix pair.

We formalize all three internal diagnostics as follows. Let $c\in\{c_{\mathrm{off}},c_{\mathrm{van}},c_{\mathrm{meth}}\}$ index the fixed 4,000-update offline checkpoint, the Vanilla OPD checkpoint after 500 Stage~2 updates, and the \method{} checkpoint after 500 Stage~2 updates, respectively. Let $a\in\{\mathrm{self},\mathrm{corp}\}$ index a stored self-generated prefix and its length-matched corpus-prefix control from the same prompt and position. For prompt $n\in\{1,\ldots,N\}$, response position $t\in\{1,\ldots,K\}$, layer $l\in\{1,\ldots,L\}$ of width $d_l$ and $N_l$ spiking neurons, and simulation step $\tau\in\{1,\ldots,T_s\}$, write $s_{n,t,l,i,\tau}^{c,a}\in\{0,1\}$ for a spike, $H_{n,t,l,\tau}^{c,a}\in\mathbb{R}^{d_l}$ for the hidden vector, and $U_{n,t,l,\tau}^{c,a}\in\mathbb{R}^{d_l}$ for the corresponding membrane-potential vector. Here $N=256$, $K=32$, and $T_s=4$.

For a single prompt, position, layer, checkpoint, and prefix type, define the simulation-time and neuron-averaged spike rate
\begin{equation}
r_{n,t,l}^{c,a}
=\frac{1}{T_sN_l}\sum_{\tau=1}^{T_s}\sum_{i=1}^{N_l}s_{n,t,l,i,\tau}^{c,a}.
\label{eq:diag_rate}
\end{equation}
The checkpoint-specific spike-rate drift at rollout depth $t$ is
\begin{equation}
D_{\mathrm{spk}}(c,t)
=\frac{1}{NL}\sum_{n=1}^{N}\sum_{l=1}^{L}
\left|r_{n,t,l}^{c,\mathrm{self}}-r_{n,t,l}^{c,\mathrm{corp}}\right|.
\label{eq:diag_spike_drift}
\end{equation}
Figure~2A in the main paper plots $10^4D_{\mathrm{spk}}(c,t)$ for Offline-only, Vanilla OPD, and \method{}; the multiplier is a display scale, not a physical unit. The direct Offline-only--\method{} comparison is
\begin{equation}
\Delta_{\mathrm{spk}}(t)
=D_{\mathrm{spk}}(c_{\mathrm{off}},t)
-D_{\mathrm{spk}}(c_{\mathrm{meth}},t),
\label{eq:diag_spike_improvement}
\end{equation}
where $\Delta_{\mathrm{spk}}(t)>0$ means that \method{} has a smaller spike-rate gap than Offline-only on the same stored prefixes and controls. Vanilla OPD is evaluated under the same protocol as an intermediate diagnostic control, while $\Delta_{\mathrm{spk}}(t)$ retains the direct Offline-only--\method{} definition used for the reported improvement.

The hidden vector is first aggregated over simulation time:
\begin{equation}
\bar H_{n,t,l}^{c,a}
=\frac{1}{T_s}\sum_{\tau=1}^{T_s}H_{n,t,l,\tau}^{c,a}.
\label{eq:app_diag_hidden_aggregate}
\end{equation}
For every coordinate $j$, prompt, position, and layer, define the absolute hidden discrepancy and its layer mean by
\begin{equation}
\begin{aligned}
\delta_{n,t,l,j}^{H,c}
&=\left|\bar H_{n,t,l,j}^{c,\mathrm{self}}
-\bar H_{n,t,l,j}^{c,\mathrm{corp}}\right|,\\
g_{n,t,l}^{H,c}
&=\frac{1}{d_l}\sum_{j=1}^{d_l}\delta_{n,t,l,j}^{H,c},\\
G_H(c)
&=\frac{1}{N K L}\sum_{n=1}^{N}\sum_{t=1}^{K}
\sum_{l=1}^{L}g_{n,t,l}^{H,c}.
\end{aligned}
\label{eq:app_diag_hidden_gap}
\end{equation}
Thus $g_{n,t,l}^{H,c}$ is the hidden-state gap for one prompt--position--layer item, and $G_H(c)$ is the reported checkpoint-level gap. Membrane drift retains the simulation-step axis instead of averaging the trace before taking the distance:
\begin{equation}
\begin{aligned}
D_U(c)
&=\frac{1}{N K L T_s}
\sum_{n=1}^{N}\sum_{t=1}^{K}\sum_{l=1}^{L}
\sum_{\tau=1}^{T_s}\\[-2pt]
&\quad\frac{1}{d_l}
\left\|U_{n,t,l,\tau}^{c,\mathrm{self}}
-U_{n,t,l,\tau}^{c,\mathrm{corp}}\right\|_1.
\end{aligned}
\label{eq:app_diag_membrane_drift}
\end{equation}
Only the hidden state is averaged over simulation time before taking the distance. Equation~\ref{eq:app_diag_membrane_drift} instead computes a membrane distance at each of the four simulation steps and then averages over $\tau$. In both diagnostics, $1/d_l$ produces a coordinate mean within a layer, and the outer $1/L$ gives every layer equal weight rather than weighting wider layers more heavily. The reported values are $G_H(c_{\mathrm{off}})=0.102$, $G_H(c_{\mathrm{van}})=0.100$, and $G_H(c_{\mathrm{meth}})=0.095$ for hidden states, and $D_U(c_{\mathrm{off}})=3.46$, $D_U(c_{\mathrm{van}})=3.30$, and $D_U(c_{\mathrm{meth}})=2.67$ for membrane potentials.

Both quantities use native model-internal units. The membrane coordinates are not physical volts, are not divided by $V_{\mathrm{th}}$, are not variance-normalized, and are not $z$-scored; the hidden coordinates likewise receive no variance normalization or $z$-scoring beyond the explicit temporal and coordinate means above. Consequently, these raw diagnostics support paired checkpoint comparisons only when architecture, parameterization, activation scale, prefix bank, and aggregation are held fixed. They must not be compared directly across model scales or differently parameterized SNNs without an additional shared normalization. These quantities are diagnostics only and are not direct optimization targets except for the firing-rate statistics in Eq.~\ref{eq:app_spike}.

\subsection{Shared-prefix cross-evaluation protocol}

The fixed 4,000-update offline checkpoint generated once and stored one 32-token trajectory for each of the same 256 prompts, forming a common prefix bank. Offline-only, the 500-update Vanilla OPD checkpoint, and the 500-update \method{} checkpoint are cross-evaluated by replaying every stored token identity unchanged through all three checkpoints; no checkpoint-specific resampling is performed during the diagnostic comparison. The frozen ANN teacher also scores these same stored prefixes. For internal-dynamics metrics, each stored rollout prefix is paired with its corresponding offline-prefix control, and the same pair is used for all three checkpoints. We first compute teacher KL, spike-rate drift, hidden-state gap, and membrane drift within each prompt--prefix pair and then aggregate across the common bank. This matched-input protocol measures conditional checkpoint responses on a common prefix distribution; it does not compare independently resampled native rollouts. As stated in the main paper, these trajectory diagnostics are reported for the 125M model only.

\section{Why Full-Vocabulary KL Is an Exact Stage~2 Target}
\label{app:full_kl_theory}

\subsection{Snapshot objective and unbiased trajectory average}

Because the active policy changes between updates, distinguish the parameters $\bar\theta$ that generate a rollout batch from the parameters $\theta$ differentiated in that update. Define the snapshot objective
\begin{equation}
\begin{aligned}
J_{\mathrm{OPD}}(\theta;\bar\theta)
&=\E_{x\sim\mathcal{D}_{\mathrm{prompt}}}
\E_{y\sim\pi_{\bar\theta}(\cdot\mid x)}\left[
\frac{1}{K}\sum_{t=1}^{K}\right.\\[-2pt]
&\hspace{3.3em}\left.
\KL(p_T(\cdot\mid s_t)\|q_\theta(\cdot\mid s_t))
\right].
\end{aligned}
\label{eq:app_snapshot}
\end{equation}

\begin{proposition}[Exact per-prefix evaluation]
At any fixed prefix $s_t$, Eq.~\ref{eq:app_opd} evaluates the teacher--student forward KL exactly over $\V$. It introduces no bias from vocabulary truncation or sampled teacher tokens.
\end{proposition}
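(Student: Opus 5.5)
The plan is to argue directly from the definitions in Eq.~\ref{eq:app_distributions} and Eq.~\ref{eq:app_opd}, since the statement is essentially a claim that the computed quantity coincides with the mathematical object it names. Fix a prefix $s_t$, whether it comes from a rollout or is held fixed, since token identities are detached. Both $p_T(\cdot\mid s_t)$ and $q_\theta(\cdot\mid s_t)$ are softmax outputs over the finite vocabulary $\V$. Each is therefore a strictly positive probability vector: every coordinate is $\exp(z_v)/\sum_{u}\exp(z_u)>0$. I will first record this positivity, because it guarantees that every term $p_T(v\mid s_t)\log\bigl(p_T(v\mid s_t)/q_\theta(v\mid s_t)\bigr)$ is finite and well defined. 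No $0\log 0$ conventions or infinite terms are needed, and the forward KL is a finite sum of $|\V|$ real numbers.

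Next I will write the per-prefix summand of Eq.~\ref{eq:app_opd} in the logit form
\[
\KL\!\left(p_T\,\|\,q_\theta\right)=\sum_{v\in\V}p_T(v\mid s_t)\bigl(z_{T,v}-z_{\theta,v}\bigr)-\log\sum_{u\in\V}e^{z_{T,u}}+\log\sum_{u\in\V}e^{z_{\theta,u}}.
\]
This makes explicit that the quantity depends only on the two full logit vectors, and that step 4 of the update sequence computes it as a deterministic finite sum. That step uses no truncation to a top-$k$ support and no token-level Monte Carlo estimator of the form $\log p_T(y)/q_\theta(y)$ with $y$ sampled.

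For the bias claim, I will compare with the two alternatives the statement rules out. A truncated sum over $S\subsetneq\V$, possibly renormalized, generally differs from the full KL. A sampled-token estimator is a random variable whose expectation equals the KL but which has nonzero variance, and its logit gradient is not the dense $q_\theta-p_T$. Since our computed value is the exact deterministic sum, its difference from the true KL is identically zero. The same holds for its gradient $\nabla_{z_\theta}=q_\theta-p_T$, obtained by differentiating the display above. Hence there is no bias, not even conditionally on $s_t$.

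The main subtlety is not the computation but the scope of the statement. Exactness holds per prefix. Any bias relative to a population objective can arise only through the choice of prefixes, and that is handled separately by the snapshot objective Eq.~\ref{eq:app_snapshot}. I will state explicitly that the proposition concerns the conditional quantity at fixed $s_t$, and that numerical precision issues such as a stable log-softmax are outside its mathematical content.
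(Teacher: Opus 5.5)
Your proposal is correct and matches the paper's proof. The paper simply observes that both distributions are evaluated at every coordinate of $\V$ and that the implemented loss is the defining finite sum, so no random token subset enters; your extra observations (strict positivity of the softmax outputs, which keeps every term finite, and the logit-form rewrite) are sound refinements it omits, though your aside that a sampled-token estimator is unbiased holds only when tokens are drawn from $p_T$, and nothing in your argument depends on it.
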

\begin{proof}
Both $p_T(\cdot\mid s_t)$ and $q_\theta(\cdot\mid s_t)$ are explicitly evaluated at every vocabulary coordinate, and the implemented loss is the defining finite sum
\begin{equation}
\sum_{v\in\V}p_T(v\mid s_t)
\log\frac{p_T(v\mid s_t)}{q_\theta(v\mid s_t)}.
\end{equation}
No random token subset appears in this expression.
\end{proof}

\begin{proposition}[Unbiased Monte Carlo estimate of the snapshot objective]
Let $B$ prompt--trajectory pairs be sampled independently from the distribution in Eq.~\ref{eq:app_snapshot}. Their empirical mean is unbiased for $J_{\mathrm{OPD}}(\theta;\bar\theta)$. Under the usual interchangeability condition for differentiation and expectation, its detached-prefix gradient is unbiased for $\nabla_\theta J_{\mathrm{OPD}}(\theta;\bar\theta)$.
\end{proposition}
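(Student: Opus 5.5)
We write the per-trajectory quantity as
\[
f_\theta(x,y)=\frac{1}{K}\sum_{t=1}^{K}\KL\!\left(p_T(\cdot\mid s_t)\,\|\,q_\theta(\cdot\mid s_t)\right),
\qquad s_t=(x,y_{<t}),
\]
so that Eq.~\ref{eq:app_snapshot} reads $J_{\mathrm{OPD}}(\theta;\bar\theta)=\E_{x\sim\mathcal D_{\mathrm{prompt}}}\E_{y\sim\pi_{\bar\theta}(\cdot\mid x)}[f_\theta(x,y)]$. Eq.~\ref{eq:app_opd} is the empirical mean $\hat J_B=\frac1B\sum_b f_\theta(x_b,y_b)$. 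By the preceding Proposition (exact per-prefix evaluation), each summand is computed exactly and deterministically given the prefix. Hence $f_\theta(x_b,y_b)$ is the exact integrand, with no inner estimation noise.

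\textbf{Unbiasedness of the objective.} Each pair $(x_b,y_b)$ has the joint law $\mathcal D_{\mathrm{prompt}}(x)\,\pi_{\bar\theta}(y\mid x)$. This law is fixed: it depends on $\bar\theta$, not on $\theta$. Therefore $\E[f_\theta(x_b,y_b)]=J_{\mathrm{OPD}}(\theta;\bar\theta)$ for every $b$, and linearity of expectation gives $\E[\hat J_B]=J_{\mathrm{OPD}}$. Independence is not needed for unbiasedness; it only matters for variance, $\mathrm{Var}[\hat J_B]=\mathrm{Var}[f_\theta]/B$. We record this as a remark. We also need $\E|f_\theta|<\infty$. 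This holds because $K$ is finite and the vocabulary $\V$ is finite. For each prefix, the KL equals $-H(p_T)-\sum_v p_T(v)\log q_\theta(v)$, which is bounded whenever the student logits are finite.

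\textbf{Unbiasedness of the gradient.} This is the step where the snapshot construction does the real work. Because the sampled tokens are detached and the sampling law depends only on $\bar\theta$, differentiating in $\theta$ produces no score-function term $\nabla\log\pi$. In the full on-policy objective with $\bar\theta=\theta$ such a term would appear. Here, by contrast, the detached-prefix gradient $\frac1B\sum_b\nabla_\theta f_\theta(x_b,y_b)$ is exactly the pathwise gradient of the integrand. Under the stated interchangeability condition (for instance, dominated convergence with an integrable envelope for $\nabla_\theta f_\theta$ in a neighbourhood of $\theta$), we get $\E[\nabla_\theta f_\theta]=\nabla_\theta\E[f_\theta]=\nabla_\theta J_{\mathrm{OPD}}$. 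Linearity then gives unbiasedness of the batch gradient.

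For concreteness, the per-prefix gradient can be written explicitly as $(q_\theta-p_T)^\top\partial z_\theta/\partial\theta$, with the logit gradient taken from the main text. This shows the expression is well defined and finite.

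\textbf{Main obstacle.} The main obstacle is clarifying what "gradient" means for the SNN. The spike function is not differentiable, and training uses surrogate gradients. I would state explicitly that $\nabla_\theta$ denotes whatever (surrogate) derivative map is applied consistently to $f_\theta$, and that $J_{\mathrm{OPD}}$'s gradient is understood as the expectation of that same map. The claim is then unbiasedness relative to this derivative operator, conditional on the interchangeability hypothesis. The claim is not that surrogate gradients equal true gradients of the discontinuous network.
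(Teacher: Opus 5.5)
Your proof is correct and follows the paper's own argument. Each sample's law depends only on $\bar\theta$, so linearity of expectation gives unbiasedness of the empirical mean, and the interchange of $\nabla_\theta$ and expectation gives unbiasedness of the gradient, with detached prefixes ensuring that differentiation acts only on $q_\theta$ at the realized states. Your extra remarks are sound refinements of what the paper leaves implicit: independence affects only the variance, there is no score-function term, and $\nabla_\theta$ should be read as the consistently applied surrogate derivative.
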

\begin{proof}
The result follows from linearity of expectation over independently sampled prompt--trajectory pairs. Because rollout identities are detached, differentiation acts only on $q_\theta$ at the realized prefixes. If the per-trajectory gradient is integrable, differentiation and expectation can be interchanged.
\end{proof}

The statement is deliberately about the snapshot, or semi-gradient, objective. It does not differentiate through discrete sampling or through the dependence of the future state distribution on $\theta$, and it is not a global convergence claim for the nonconvex SNN. The prefix sample is stochastic; the vocabulary sum at each sampled prefix is exact.

\subsection{Cross-entropy equivalence, gradient, and curvature}

At a fixed prefix, abbreviate $p_v=p_T(v\mid s_t)$ and $q_v=q_\theta(v\mid s_t)$. Then
\begin{equation}
\KL(p\|q)=-\sum_{v\in\V}p_v\log q_v-H(p),
\label{eq:app_ce_equivalence}
\end{equation}
where the teacher entropy $H(p)$ is constant with respect to the student. Full forward KL and full-vocabulary teacher cross-entropy therefore have identical student gradients.

Let $z\in\mathbb{R}^{|\V|}$ be the student logits and $q=\operatorname{softmax}(z)$. For every token $j$,
\begin{equation}
\frac{\partial\KL(p\|q)}{\partial z_j}=q_j-p_j.
\label{eq:app_logit_gradient}
\end{equation}
Thus every vocabulary coordinate receives a calibrated correction. Tokens overweighted by the student have positive gradient, while tokens underweighted relative to the teacher have negative gradient.

\begin{proposition}[Positive semidefinite logit curvature]
The Hessian of the per-prefix forward KL with respect to student logits is
\begin{equation}
\nabla_z^2\KL(p\|q)=\operatorname{diag}(q)-qq^\top\succeq0.
\label{eq:app_hessian}
\end{equation}
Hence the loss is convex in logits, up to the softmax-invariant additive logit direction.
\end{proposition}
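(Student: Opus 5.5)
The plan is to differentiate the logit gradient already obtained in Eq.~\ref{eq:app_logit_gradient} once more. Since $\partial\KL(p\|q)/\partial z_j=q_j-p_j$ and the teacher distribution $p$ does not depend on the student logits, the Hessian entry $\partial^2\KL/\partial z_i\partial z_j$ equals $\partial q_j/\partial z_i$. I will compute this softmax Jacobian directly from $q_j=e^{z_j}/\sum_k e^{z_k}$, which gives $\partial q_j/\partial z_i=q_j(\mathbf 1[i=j]-q_i)$. In matrix form this is exactly $\operatorname{diag}(q)-qq^\top$, establishing Eq.~\ref{eq:app_hessian}.

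For positive semidefiniteness, I will take an arbitrary $u\in\mathbb{R}^{|\V|}$ and write
\begin{equation}
u^\top(\operatorname{diag}(q)-qq^\top)u=\sum_v q_v u_v^2-\Bigl(\sum_v q_v u_v\Bigr)^2 .
\end{equation}
This is the variance of $u_V$ under $V\sim q$, so it is nonnegative. Equivalently, the inequality follows from Jensen or Cauchy--Schwarz with weights $q_v\ge0$ summing to one. The variance form also settles the qualifier in the statement. Because every softmax entry is strictly positive, the variance vanishes exactly when $u_V$ is $q$-almost surely constant, that is, when $u\propto\mathbf 1$. Hence the kernel is precisely the span of $\mathbf 1$, the softmax-invariant direction $z\mapsto z+c\mathbf 1$, and the Hessian is positive definite on the complement of that direction.

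Convexity in logits then follows from the standard second-order characterization: a twice-differentiable function with a PSD Hessian everywhere on the convex domain $\mathbb{R}^{|\V|}$ is convex. As a cross-check I will note that, by Eq.~\ref{eq:app_ce_equivalence}, the loss equals $\operatorname{logsumexp}(z)-p^\top z$ up to a constant. This is a convex log-sum-exp function plus a linear term, which gives the same conclusion without the Hessian.

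The only delicate step is interpreting ``up to the additive direction'' correctly. Strict convexity fails along $\mathbf 1$, and I will phrase the result as strict convexity on the quotient by $\mathbf 1$ (equivalently, on any hyperplane transverse to $\mathbf 1$). I will also remark that this is convexity in logits only, not in the SNN parameters $\theta$, consistent with the paper's disclaimer that no global convergence claim is made for the nonconvex network.
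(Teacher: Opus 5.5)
Your proposal is correct and follows the paper's proof: the key step in both is identifying $u^\top(\operatorname{diag}(q)-qq^\top)u$ with $\operatorname{Var}_{j\sim q}(u_j)\ge 0$. The rest of your plan is correct and supplies details the paper leaves implicit: the softmax-Jacobian derivation of the Hessian, the identification of the kernel as $\operatorname{span}(\mathbf 1)$ using strict positivity of $q$, and the log-sum-exp cross-check.
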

\begin{proof}
For any vector $a$,
\begin{equation}
\begin{aligned}
a^\top(\operatorname{diag}(q)-qq^\top)a
&=\E_{j\sim q}[a_j^2]-\E_{j\sim q}[a_j]^2\\
&=\operatorname{Var}_{j\sim q}(a_j)\ge0.
\end{aligned}
\end{equation}
\end{proof}
This local logit property does not make the parameterized SNN objective convex. Its practical value is that full KL supplies a dense, well-defined correction on every visited prefix without introducing an additional support-selection rule.

\section{Stability Terms and Their Gradients}
\label{app:stability}

\subsection{Matched-prefix reference KL}

At one prefix, abbreviate $q=q_\theta(\cdot\mid s_t)$ and $r=q_{\mathrm{ref}}(\cdot\mid s_t)$. The reference term satisfies
\begin{equation}
\KL(q\|r)=\E_{j\sim q}[\log q_j-\log r_j].
\label{eq:app_ref_identity}
\end{equation}
It therefore measures the expected log-probability displacement from the frozen offline SNN under behavior currently favored by the active student. For student logit $z_j$,
\begin{equation}
\frac{\partial\KL(q\|r)}{\partial z_j}
=q_j\left(\log\frac{q_j}{r_j}-\KL(q\|r)\right).
\label{eq:app_ref_gradient}
\end{equation}
The teacher and reference play different roles on the same prefix: Eq.~\ref{eq:app_logit_gradient} points toward the ANN teacher, whereas Eq.~\ref{eq:app_ref_gradient} penalizes a large departure from the competent offline policy. Because this is a soft penalty, teacher-supported deviations remain possible when their reduction in full-KL loss outweighs the reference cost.

Using the active distribution in the first argument is intentional. It emphasizes tokens to which the updated student assigns probability even when the offline reference considers them unlikely. This is the behavior implicated in self-reinforcing rollout drift and repetition collapse. The empirical stability results in the main paper support this mechanism; Eq.~\ref{eq:app_ref_identity} alone is not a proof that every trajectory remains stable.

\subsection{Spike-rate regularization}

For one layer, define $N_r=BKT_sN_l$, so that $r=N_r^{-1}\sum_i s_i$, where the compact index $i$ ranges over batch items, response positions, simulation steps, and neurons. The derivative of the layer penalty in Eq.~\ref{eq:app_spike} with respect to an individual spike (with derivative zero at an interval boundary) is
\begin{equation}
\begin{aligned}
\frac{\partial\ell_l}{\partial s_i}
&=\frac{2}{N_r}\bigl[
(r-r_{\min})\mathbf{1}_{r<r_{\min}}\\
&\quad +(r-r_{\max})\mathbf{1}_{r>r_{\max}}
+\rho(r-r^{\mathrm{ref}})\bigr].
\end{aligned}
\label{eq:app_spike_gradient}
\end{equation}
Binary thresholding is differentiated with a surrogate derivative,
$\partial s_i/\partial u_i\approx\psi(u_i-V_{\mathrm{th}})$ \citep{neftci2019surrogate}.

\begin{proposition}[Bounded activity gradient]
Suppose all rates and interval endpoints lie in $[0,1]$ and $|\psi|\le M_\psi$. The magnitude of the membrane-level gradient contributed by one monitored layer and one spike is at most
\begin{equation}
\frac{2(1+\rho)M_\psi}{N_r},
\end{equation}
before the outer average over the monitored set (four layers in the default configuration) and the fixed factor $0.30$.
\end{proposition}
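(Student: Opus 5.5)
The plan is to combine the per-spike derivative in Eq.~\ref{eq:app_spike_gradient} with the surrogate chain rule $\partial s_i/\partial u_i\approx\psi(u_i-V_{\mathrm{th}})$. This gives a membrane-level contribution
\[
\frac{\partial\ell_l}{\partial u_i}=\frac{\partial\ell_l}{\partial s_i}\,\psi(u_i-V_{\mathrm{th}}).
\]
Its magnitude is at most $M_\psi\,|\partial\ell_l/\partial s_i|$. So it suffices to show that the bracket in Eq.~\ref{eq:app_spike_gradient} is bounded in absolute value by $1+\rho$.

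For the bracket, I would split on the position of $r$ relative to the interval $[r_{\min},r_{\max}]$. Assume, as is implicit for a viable interval, that $r_{\min}\le r_{\max}$. Then the two indicators $\mathbf{1}_{r<r_{\min}}$ and $\mathbf{1}_{r>r_{\max}}$ are mutually exclusive, and at most one interval term is active.

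\begin{itemize}
\item If $r<r_{\min}$, the interval term is $r-r_{\min}$. Since $r,r_{\min}\in[0,1]$, its magnitude is $r_{\min}-r\le 1$.
\item If $r>r_{\max}$, the interval term satisfies $|r-r_{\max}|\le 1$ for the same reason.
\item Inside the interval, or at a boundary (where the derivative is set to zero), the interval term vanishes.
\end{itemize}

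The reference term satisfies $|\rho(r-r^{\mathrm{ref}})|\le\rho$, because both rates lie in $[0,1]$ and $\rho\ge 0$ (here $\rho=1$). The triangle inequality then bounds the bracket by $1+\rho$.

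Multiplying through gives $|\partial\ell_l/\partial u_i|\le 2(1+\rho)M_\psi/N_r$, as claimed. This is the contribution of a single layer $\ell_l$ and a single spike. The outer factor $1/|\mathcal S_{\mathrm{spk}}|$ and the weight $0.30$ only shrink it further, which is why the statement is phrased before those factors.

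The only delicate point is interpretive rather than computational. The bound treats $r$ and $r^{\mathrm{ref}}$ as fixed when differentiating with respect to a single spike: $r^{\mathrm{ref}}$ comes from the frozen reference and is a constant. It also counts only the direct path from $u_i$ through $s_i$, ignoring indirect dependence of other spikes in later simulation steps through the reset term in Eq.~\ref{eq:lif}. I would state explicitly that "membrane-level gradient contributed by one spike" means exactly this local chain-rule factor. I would also note that the nonnegativity of $\rho$ and the ordering $r_{\min}\le r_{\max}$ are used; both hold in Table~\ref{tab:app_reproducibility}.
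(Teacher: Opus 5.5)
Your proposal is correct and follows essentially the same route as the paper: apply the surrogate chain rule to Eq.~\ref{eq:app_spike_gradient}, note that at most one interval term is active and bounded by one, bound the reference deviation by one (so that term is at most $\rho$), then use the triangle inequality and multiply by $M_\psi$. Your explicit remarks that $r_{\min}\le r_{\max}$ and $\rho\ge 0$ are needed, and that only the direct $u_i\to s_i$ path is counted, spell out assumptions the paper's proof leaves implicit.
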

\begin{proof}
At most one interval term is active, and its absolute deviation is at most one. The reference-rate deviation is also at most one. Apply the triangle inequality to Eq.~\ref{eq:app_spike_gradient} and multiply by the surrogate-gradient bound.
\end{proof}

The interval terms discourage silent or saturated operating regimes, whereas the reference-rate term retains how activity is distributed across depth. A network-wide mean would allow one overactive layer to cancel an underactive layer; the layerwise loss does not. The constraint acts on aggregate event traffic, not on individual spike identities, hidden states, or membrane values.

\section{Analytical Efficiency Reporting}
\label{app:energy}

We follow the analytical inference protocol used by \bispikclm{} \citep{guo2026bispikclm}. This is an operation-based estimate, not a wall-plug or on-chip power measurement. Under the 45-nm operation-energy model adopted by that protocol, a dense multiply--accumulate (MAC) costs
\begin{equation}
E_{\mathrm{MAC}}=4.6\ \mathrm{pJ},
\end{equation}
whereas an event-driven accumulation (AC) costs
\begin{equation}
E_{\mathrm{AC}}=0.9\ \mathrm{pJ}.
\end{equation}

For a spike-driven component $c$ in layer $l$, let $f_r^{c}(l)$ be its measured firing rate in spikes per neuron per simulation step, $T_s$ the number of simulation steps, and $\mathrm{FLOPs}_{c}(l)$ the MAC count of the equivalent dense component. Its synaptic-operation count is
\begin{equation}
\mathrm{SOPs}_{c}(l)=f_r^{c}(l)T_s\mathrm{FLOPs}_{c}(l).
\label{eq:app_sops}
\end{equation}
Embedding and LM-head computations remain dense, while the SFSA and SFFN blocks are counted as spike-driven AC computation. The reported analytical operation count is
\begin{equation}
\begin{aligned}
\mathrm{OPs}_{\mathrm{SNN}}
&=\mathrm{FLOPs}_{\mathrm{Embed}}
+\mathrm{FLOPs}_{\mathrm{LM\mbox{-}head}}\\
&\quad+\sum_{l=1}^{L}\left(
\mathrm{SOPs}_{\mathrm{SFSA}}(l)
+\mathrm{SOPs}_{\mathrm{SFFN}}(l)\right),
\end{aligned}
\label{eq:app_snn_ops}
\end{equation}
and the corresponding energy estimate is
\begin{equation}
\begin{aligned}
E_{\mathrm{SNN}}
&=E_{\mathrm{MAC}}
\left(\mathrm{FLOPs}_{\mathrm{Embed}}+\mathrm{FLOPs}_{\mathrm{LM\mbox{-}head}}\right)\\
&\quad+E_{\mathrm{AC}}\sum_{l=1}^{L}
\left(\mathrm{SOPs}_{\mathrm{SFSA}}(l)
+\mathrm{SOPs}_{\mathrm{SFFN}}(l)\right).
\end{aligned}
\label{eq:app_snn_energy}
\end{equation}
For the matched ANN teacher, all counted components use dense MACs, giving
\begin{equation}
E_{\mathrm{ANN}}=E_{\mathrm{MAC}}\,\mathrm{FLOPs}_{\mathrm{ANN,total}}.
\label{eq:app_ann_energy}
\end{equation}
When operation counts are expressed in billions, multiplying by pJ per operation yields mJ. The main table reports the aggregate model Rate rounded to two decimal places, but Eqs.~\ref{eq:app_sops}--\ref{eq:app_snn_energy} use the unrounded component- and layer-specific firing rates. Consequently, the displayed aggregate Rate is not multiplied by a single FLOP total and cannot be used to reconstruct OPs or energy. In particular, a slightly higher rounded aggregate Rate can coexist with lower OPs when activity is redistributed toward components with smaller equivalent dense-operation counts. The OPs column also combines dense MAC counts and spike-driven AC counts, whose energy constants differ. The aggregate OPs, Rate, and Energy values are reported in the main paper; they should be interpreted as arithmetic proxies under this shared boundary, not as hardware measurements.

\section{Offline \spad{} Initialization}
\label{app:offline_spad}

This section specifies how the 4,000-update checkpoint $\theta_0$ is constructed. It expands the brief initialization description in the main paper into an implementation-level record; it does not introduce an additional contribution or a loss used by \method{}. Stage~1 adopts the public \bispikclm{} \spad{} data, optimizer, temporal, surrogate, and five-branch outer-loss recipe \citep{guo2026bispikclm}; the two inner SAA/SFA mixing coefficients are checkpoint-local fields and are identified separately below. The ANN teacher is frozen, the SNN student is trainable, and both use the OPT tokenizer and vocabulary. After Stage~1, \method{} discards the five losses below and accesses the checkpoint only through its initialization, $q_{\mathrm{ref}}$, and $r_l^{\mathrm{ref}}$; Continued \spad{} is the intentionally separate control that reuses them.

\subsection{Stage~1 data and optimization record}
\label{app:stage1_optimization}

The retained checkpoint uses the $T_s=4$ branch of the public recipe and the FineWeb-Edu 10BT sample without an additional filtering stage \citep{penedo2024fineweb}. ``10BT'' identifies the source pool, not the number of tokens necessarily consumed by the 4,000 retained updates. The public training record does not state a fixed sequence length or a packing policy, so we do not infer either quantity from the corpus name. Table~\ref{tab:app_stage1_reproducibility} records every Stage~1 field available in the public report or the retained checkpoint configuration and separates it from the Stage~2 settings in Table~\ref{tab:app_reproducibility}.

\begin{table}[t]
\centering
\small
\setlength{\tabcolsep}{4pt}
\begin{tabular}{ll}
\toprule
Stage~1 setting & Value \\
\midrule
Teacher / student & OPT / \bispikclm{} \\
Tokenizer and vocabulary & Shared with OPT \\
Updates used for $\theta_0$ & 4,000 \\
Simulation steps $T_s$ & 4 \\
Optimizer & Adam \\
Learning rate & $5\times10^{-4}$ \\
Schedule & Cosine decay \\
Warm-up ratio & 0.2 \\
Microbatch per GPU & 16 \\
Gradient accumulation & 16 steps \\
Effective batch per GPU & 256 \\
Global effective batch (8 GPUs) & 2,048 \\
Global gradient-norm clip & 0.7 \\
Soft-target temperature $T_{\mathrm{KD}}$ & 2.0 \\
Surrogate family / sharpness & Arctangent / $\xi=2$ \\
Hardware & $8\times$ RTX 4090 (24GB) \\
\bottomrule
\end{tabular}
\caption{Stage~1 initialization record. The public recipe's effective batch of 256 is device-local ($16\times16$); under eight-way data parallelism the corresponding global effective batch is 2,048 sequences per optimizer step.}
\label{tab:app_stage1_reproducibility}
\end{table}

\subsection{Implementation-level softmax-free spiking attention}
\label{app:stage1_sfsa}

Let $B_1$ be the offline minibatch size, $n$ the padded sequence length, $h$ the number of heads, and $d_h$ the width of one head. At student layer $l$ and simulation step $\tau$, the input is $X_S^{l,\tau}\in\{0,1\}^{B_1\times n\times hd_h}$. The compact main-paper expression names the pre-output aggregation $A_SV_S$ as $O$ and writes score scaling explicitly; the public executable path uses unscaled coincidence counts and continues through the downstream attention-output and output-projection neurons. These are hard-forward parameterizations of the same stateful attention neuron only when the complete membrane scale is transformed. Specifically, with $c=\sqrt{d_h}$, compare $u^\tau=\alpha u^{\tau-1}+C^\tau/c-v_Aa^{\tau-1}$ against $\widetilde u^\tau=\alpha\widetilde u^{\tau-1}+C^\tau-cv_Aa^{\tau-1}$. If $\widetilde u^0=cu^0$ and there is no additional unscaled bias, induction gives $\widetilde u^\tau=cu^\tau$ and therefore identical hard spikes under thresholds $v_A$ and $cv_A$. This observation reconciles the forward notation; it does not assert that the two surrogate backward graphs are identical at a fixed sharpness. All Stage~1 equations and gradient bounds below refer to the unscaled public execution path with $\xi=2$. For head $a$, that implementation-level order is
\begin{equation}
\begin{aligned}
Q_{S,a}^{l,\tau}
&=\mathcal S_Q(X_S^{l,\tau}W_{Q,a}^{l}),\\
K_{S,a}^{l,\tau}
&=\mathcal S_K(X_S^{l,\tau}W_{K,a}^{l}),\\
V_{S,a}^{l,\tau}
&=\mathcal S_V(X_S^{l,\tau}W_{V,a}^{l}),\\
C_{S,a}^{l,\tau}
&=Q_{S,a}^{l,\tau}(K_{S,a}^{l,\tau})^\top,\\
\widetilde C_{S,a}^{l,\tau}
&=M\odot C_{S,a}^{l,\tau},\\
A_{S,a}^{l,\tau}
&=M\odot\mathcal S_A(\widetilde C_{S,a}^{l,\tau}),\\
Z_{S,a}^{l,\tau}
&=A_{S,a}^{l,\tau}V_{S,a}^{l,\tau},\\
R_{S,a}^{l,\tau}
&=\mathcal S_{AV}(Z_{S,a}^{l,\tau}),\\
O_S^{l,\tau}
&=\mathcal S_O\!\left(
\operatorname{Concat}_{a=1}^{h}R_{S,a}^{l,\tau}W_O^l\right).
\end{aligned}
\label{eq:app_sfsa_detailed}
\end{equation}
For batch item $b$, let $m_{b,i}\in\{0,1\}$ be the non-padding indicator and define $M_{b,i,j}=m_{b,i}m_{b,j}\mathbf 1[j\le i]$. The first multiplication in Eq.~\ref{eq:app_sfsa_detailed} produces integer coincidence counts from binary $Q$ and $K$. Masked counts enter the attention neuron; the mask is reapplied to its output, and $\mathcal S_{AV}$ converts the integer $AV$ accumulation back to spikes before output projection. Consequently, $A_{S,a,b,i,j}^{l,\tau}=0$ for every future or padded key, and $R_{S,a,b,i}^{l,\tau}$ depends only on valid value spikes at positions $j\le i$. Equation~\ref{eq:app_sfsa_detailed}'s $Z_S=A_SV_S$ is the tensor denoted by $O$ in the main-paper shorthand; $R_S$ and the final projection spell out the subsequent public SFSA operations. There is no softmax, and the threshold parameterization avoids executing $1/\sqrt{d_h}$ as a separate floating-point operation.

For supervision, the frozen teacher exports post-softmax causal attention maps $A_{T,b}^{m}\in[0,1]^{n\times n}$. A layer map $\pi(l)$ and a head map $\chi_l(a)$ identify the teacher tensor paired with student layer $l$ and head $a$. Uniform layer skipping and head-wise mapping are used only when depths or head counts differ; for a shape-matched OPT--\bispikclm{} pair, both maps are the identity. Padding locations, future-token entries, and positions without a next-token target are excluded by their corresponding masks.

\subsection{Temporal fusion and teacher-to-spike encoding}
\label{app:stage1_operators}

For any student tensor $X_S^\tau$, temporal fusion is the empirical rate
\begin{equation}
\bar X_S=\frac{1}{T_s}\sum_{\tau=1}^{T_s}X_S^\tau.
\label{eq:app_temporal_fusion}
\end{equation}
For a set of valid scalar coordinates $\Omega_X$, we use
\begin{equation}
\operatorname{MSE}_{\Omega_X}(X,Y)
=\frac{1}{|\Omega_X|}\sum_{\omega\in\Omega_X}(X_\omega-Y_\omega)^2.
\label{eq:app_masked_mse}
\end{equation}
This definition fixes the normalization for padding and causal masks.

The static ANN tensor must also be represented in the student's spike domain. For each scalar $x$ of a teacher tensor $X_T$, repeat $x$ as a constant current and run the same subtractive-reset LIF dynamics:
\begin{equation}
\begin{aligned}
u_{x}^{\tau}&=\alpha u_{x}^{\tau-1}+x
-V_{\mathrm{th}}s_{x}^{\tau-1},\\
s_{x}^{\tau}&=\mathbf 1[u_x^\tau\ge V_{\mathrm{th}}],
\qquad
\mathcal R_{T_s}(x)=\frac{1}{T_s}\sum_{\tau=1}^{T_s}s_x^\tau.
\end{aligned}
\label{eq:app_teacher_spike_proxy}
\end{equation}
Equation~\ref{eq:app_teacher_spike_proxy} is applied entry-wise, with zero initial membrane state and the same neuron parameters as the corresponding student module. Thus $\mathcal R_{T_s}(X_T)$ is an empirical-rate target in $[0,1]$ rather than an unspecified quantizer. At $T_s=4$ it is a finite-horizon proxy; no asymptotic equality between ANN values and spike rates is assumed.

\subsection{Five aligned targets}
\label{app:stage1_losses}

For an offline sequence $x_{1:n}$, let the teacher export embeddings $E_T$, hidden states $H_T^m$, attention maps $A_T^m$, and logits $z_T$. The student produces $E_S^\tau$, $H_S^{l,\tau}$, $A_S^{l,\tau}$, and $z_S$. Projection $P_e$ maps student embeddings to teacher width. In the continuous feature branch, $P_h^l$ is the public MLP or linear width-matching map and is followed by LayerNorm. For a shape-matched pair the width projection may reduce to the identity, but LayerNorm remains part of the alignment transform.

\paragraph{Embedding alignment (EA).}
The first loss supplies a target before the first transformer block:
\begin{equation}
\loss_{\mathrm{EA}}
=\operatorname{MSE}_{\Omega_E}\!\left(P_e(\bar E_S),E_T\right).
\label{eq:app_ea}
\end{equation}

\paragraph{Spike-attention alignment (SAA).}
Let $\mathcal P_A$ contain every matched student-layer/head pair $(l,a)$, and let $\Omega_A$ contain causal, non-padding attention entries. The rate-domain and continuous-domain terms are
\begin{equation}
\begin{aligned}
\loss_{\mathrm{SAA}}^{\mathrm{rate}}
&=\frac{1}{|\mathcal P_A|}\sum_{(l,a)\in\mathcal P_A}
\operatorname{MSE}_{\Omega_A}\!\left(
\bar A_{S,a}^{l},
\mathcal R_{T_s}(A_{T,\chi_l(a)}^{\pi(l)})\right),\\
\loss_{\mathrm{SAA}}^{\mathrm{cont}}
&=\frac{1}{|\mathcal P_A|}\sum_{(l,a)\in\mathcal P_A}
\operatorname{MSE}_{\Omega_A}\!\left(
\bar A_{S,a}^{l},A_{T,\chi_l(a)}^{\pi(l)}\right),\\
\loss_{\mathrm{SAA}}
&=\gamma_{\mathrm{attn}}\loss_{\mathrm{SAA}}^{\mathrm{rate}}
+(1-\gamma_{\mathrm{attn}})\loss_{\mathrm{SAA}}^{\mathrm{cont}}.
\end{aligned}
\label{eq:app_saa}
\end{equation}
The first branch compares two empirical spike rates and therefore respects the student's discrete representation. The second branch preserves the teacher's continuous relational structure after student temporal fusion. Unlike the abbreviated earlier definition, Eq.~\ref{eq:app_saa} requires no row-renormalization operator, so an all-zero student attention row remains well-defined.

\paragraph{Spike-feature alignment (SFA).}
For the matched layer set $\mathcal P_H$, define
\begin{equation}
\widetilde H_S^l
=\operatorname{LayerNorm}\!\left(P_h^l(\bar H_S^l)\right).
\label{eq:app_projected_feature}
\end{equation}
The feature objectives are
\begin{equation}
\begin{aligned}
\loss_{\mathrm{SFA}}^{\mathrm{rate}}
&=\frac{1}{|\mathcal P_H|}\sum_{l\in\mathcal P_H}
\operatorname{MSE}_{\Omega_H}\!\left(
\bar H_S^l,\mathcal R_{T_s}(H_T^{\pi(l)})\right),\\
\loss_{\mathrm{SFA}}^{\mathrm{cont}}
&=\frac{1}{|\mathcal P_H|}\sum_{l\in\mathcal P_H}
\operatorname{MSE}_{\Omega_H}\!\left(
\widetilde H_S^l,H_T^{\pi(l)}\right),\\
\loss_{\mathrm{SFA}}
&=\gamma_{\mathrm{feat}}\loss_{\mathrm{SFA}}^{\mathrm{rate}}
+(1-\gamma_{\mathrm{feat}})\loss_{\mathrm{SFA}}^{\mathrm{cont}}.
\end{aligned}
\label{eq:app_sfa}
\end{equation}
When widths differ, the teacher tensor in the rate branch is first mapped to student width and the student tensor in the continuous branch is mapped to teacher width. In the models used here the widths match, so these shape maps reduce to identities and do not add deployment parameters.

\paragraph{Soft- and hard-token alignment.}
For valid next-token positions $\Omega_{\mathrm{tok}}$, define
\begin{equation}
\begin{aligned}
p_T^{t,T_{\mathrm{KD}}}
&=\operatorname{softmax}(z_T^t/T_{\mathrm{KD}}),\\
q_S^{t,T_{\mathrm{KD}}}
&=\operatorname{softmax}(z_S^t/T_{\mathrm{KD}}),\\
\loss_{\mathrm{STA}}
&=\frac{T_{\mathrm{KD}}^2}{|\Omega_{\mathrm{tok}}|}
\sum_{t\in\Omega_{\mathrm{tok}}}
\KL\!\left(p_T^{t,T_{\mathrm{KD}}}\|q_S^{t,T_{\mathrm{KD}}}\right),\\
\loss_{\mathrm{HTA}}
&=-\frac{1}{|\Omega_{\mathrm{tok}}|}
\sum_{t\in\Omega_{\mathrm{tok}}}
\log q_S(x_{t+1}\mid x_{\le t}).
\end{aligned}
\label{eq:app_token_alignment}
\end{equation}
The factor $T_{\mathrm{KD}}^2$ compensates for the $1/T_{\mathrm{KD}}$ factor introduced when differentiating the softened logits.

\paragraph{Complete Stage~1 objective.}
The checkpoint is trained with
\begin{equation}
\begin{aligned}
\loss_{\spad}
&=\lambda_{\mathrm{emb}}\loss_{\mathrm{EA}}
+\lambda_{\mathrm{attn}}\loss_{\mathrm{SAA}}
+\lambda_{\mathrm{feat}}\loss_{\mathrm{SFA}}\\
&\quad+\lambda_{\mathrm{soft}}\loss_{\mathrm{STA}}
+\lambda_{\mathrm{hard}}\loss_{\mathrm{HTA}},
\end{aligned}
\label{eq:app_offline_total}
\end{equation}
where
\begin{equation}
\begin{gathered}
T_{\mathrm{KD}}=2,\qquad
(\lambda_{\mathrm{emb}},\lambda_{\mathrm{attn}},\lambda_{\mathrm{feat}},
\lambda_{\mathrm{soft}},\lambda_{\mathrm{hard}})\\
=(0.2,0.1,0.1,0.3,0.3),\qquad
\gamma_{\mathrm{attn}}=\gamma_{\mathrm{feat}}=0.5.
\end{gathered}
\label{eq:app_offline_weights}
\end{equation}
The five outer weights sum to one. This keeps the numerical loss scale interpretable, but does not by itself guarantee equal parameter-gradient norms across the five branches. The two inner mixing coefficients are checkpoint-configuration fields recorded for this submission; they are not reported in the public training table and therefore constitute a checkpoint-specific completion of the public outer recipe, not a value attributed to the public paper. These Stage~1 coefficients must not be confused with the Stage~2 ratio $1:0.75:0.3$ in Eq.~\ref{eq:app_total}.

\subsection{Stage~1 update sequence}
\label{app:stage1_update}

Each offline update has the following order:
\begin{enumerate}
    \item tokenize a FineWeb-Edu batch once and use the same padding and next-token masks for teacher and student;
    \item run the frozen OPT teacher without gradient storage and cache the matched embeddings, attention maps, hidden states, and logits;
    \item unroll the \bispikclm{} student for $T_s=4$ simulation steps, retaining the spike and membrane traces required by BPTT;
    \item construct the teacher spike proxies in Eq.~\ref{eq:app_teacher_spike_proxy}, then evaluate Eqs.~\ref{eq:app_ea}, \ref{eq:app_saa}, \ref{eq:app_sfa}, and~\ref{eq:app_token_alignment};
    \item form Eq.~\ref{eq:app_offline_total}, backpropagate only through the student and any training-time alignment projections, clip the resulting global gradient norm to $0.7$, and take one Adam update under the cosine schedule.
\end{enumerate}
Only the student parameters are retained in $\theta_0$. Teacher-side spike proxies and width-alignment projections are training interfaces; they do not create an auxiliary inference path.

\subsection{Surrogate-gradient propagation with subtractive reset}
\label{app:stage1_gradient_stability}

The main paper states the LIF forward dynamics. Here we analyze the backward recurrence actually induced by its subtractive reset. To avoid overloading the leak $\alpha$, let $\xi$ denote the sharpness of the arctangent surrogate:
\begin{equation}
\begin{aligned}
\widetilde{\mathcal S}_{\xi}(v)
&=\frac{1}{\pi}\arctan\!\left(\frac{\pi\xi v}{2}\right)
+\frac{1}{2},\\
\psi_\xi(v)
&=\frac{\xi/2}{1+(\pi\xi v/2)^2}.
\end{aligned}
\label{eq:app_arctan_surrogate}
\end{equation}
Stage~1 uses $\xi=2$, and therefore
\begin{equation}
0\le\psi_\xi(v)\le M_\psi=\xi/2=1.
\label{eq:app_surrogate_bound}
\end{equation}
The hard threshold is used in the forward pass; Eq.~\ref{eq:app_arctan_surrogate} supplies only the backward derivative.

Consider one vector-valued LIF module with input current $I^\tau=WX^\tau$:
\begin{equation}
u^\tau=\alpha u^{\tau-1}+WX^\tau
-V_{\mathrm{th}}s^{\tau-1},
\qquad
s^\tau=\mathbf 1[u^\tau\ge V_{\mathrm{th}}].
\label{eq:app_lif_local}
\end{equation}
Holding the presynaptic trace fixed for this local temporal calculation, define
\begin{equation}
D_{\tau}=\operatorname{diag}\!\left(
\psi_\xi(u^\tau-V_{\mathrm{th}})\right),
\qquad
J_\tau=\alpha I-V_{\mathrm{th}}D_{\tau-1}.
\label{eq:app_temporal_jacobian}
\end{equation}
Then the surrogate temporal Jacobian is
\begin{equation}
\frac{\partial u^\tau}{\partial u^k}
\approx J_\tau J_{\tau-1}\cdots J_{k+1},
\qquad 1\le k<\tau\le T_s.
\label{eq:app_temporal_product}
\end{equation}
This expression retains the derivative of the reset spike. Dropping that term would replace $J_\tau$ by $\alpha I$ and would not describe Eq.~\ref{eq:app_lif_local}.

\begin{proposition}[Finite-horizon temporal-gradient bound]
\label{prop:app_temporal_bound}
Let $\bar\kappa=\max_{2\le\tau\le T_s}\|J_\tau\|_2$. Under the surrogate derivative in Eq.~\ref{eq:app_arctan_surrogate},
\begin{equation}
\left\|\frac{\partial u^\tau}{\partial u^k}\right\|_2
\le\bar\kappa^{\tau-k},
\qquad
\bar\kappa\le
\max\!\left\{\alpha,\left|\alpha-V_{\mathrm{th}}M_\psi\right|\right\}.
\label{eq:app_temporal_bound}
\end{equation}
If $\bar\kappa<1$, the temporal state map is contractive in this local surrogate sense. Regardless of whether this sufficient condition holds, the maximum temporal amplification within the checkpoint used here is finite and bounded by
$\max(1,\bar\kappa^3)$ because $T_s=4$.
\end{proposition}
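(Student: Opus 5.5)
The argument has three steps: bound each Jacobian factor, bound the product, and then use the short horizon $T_s=4$.

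\emph{Product bound.} Start from the surrogate product in Eq.~\ref{eq:app_temporal_product}. The spectral norm is submultiplicative, so
\[
\left\|\frac{\partial u^\tau}{\partial u^k}\right\|_2\le\prod_{m=k+1}^{\tau}\|J_m\|_2 .
\]
Every index $m$ in this product satisfies $2\le m\le T_s$, so each factor is at most $\bar\kappa$ by definition. This gives $\bar\kappa^{\tau-k}$ directly.

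\emph{Bound on $\bar\kappa$.} Each $J_\tau=\alpha I-V_{\mathrm{th}}D_{\tau-1}$ is diagonal. Its entries are $\alpha-V_{\mathrm{th}}\psi_i$, where $\psi_i=\psi_\xi(u_i^{\tau-1}-V_{\mathrm{th}})\in[0,M_\psi]$ by Eq.~\ref{eq:app_surrogate_bound}. The spectral norm of a diagonal matrix is the largest absolute diagonal entry, so it suffices to bound $|\alpha-V_{\mathrm{th}}\psi|$ over $\psi\in[0,M_\psi]$. The map $\psi\mapsto\alpha-V_{\mathrm{th}}\psi$ is affine, so its absolute value is convex and attains its maximum at an endpoint. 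At $\psi=0$ it equals $|\alpha|=\alpha$, using $\alpha\ge0$ and $V_{\mathrm{th}}>0$ for a leak factor and threshold. At $\psi=M_\psi$ it equals $|\alpha-V_{\mathrm{th}}M_\psi|$. Taking the maximum over $\tau$ yields the stated bound on $\bar\kappa$.

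\emph{Consequences.} If $\bar\kappa<1$, then the factor $\bar\kappa^{\tau-k}$ decays geometrically with the lag, which is the contraction claim. For the finite-horizon statement, note that $1\le\tau-k\le T_s-1=3$. The function $n\mapsto\bar\kappa^n$ is monotone on $\{1,2,3\}$, so its maximum is $\bar\kappa$ if $\bar\kappa\le1$ and $\bar\kappa^3$ otherwise. In both cases it is at most $\max(1,\bar\kappa^3)$, whatever the value of $\bar\kappa$.

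\emph{Main obstacle.} The one point to state carefully is the modelling assumption behind Eq.~\ref{eq:app_temporal_product}. The presynaptic trace is held fixed, and the spike is differentiated only through the surrogate, so the Jacobian truly factorizes into the diagonal $J_\tau$. Once this reading is explicit, the rest is elementary. I would also remark that $\bar\kappa$ is finite because it is a maximum over finitely many steps of quantities bounded by $\max\{\alpha,|\alpha-V_{\mathrm{th}}M_\psi|\}$.
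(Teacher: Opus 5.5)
Your proposal is correct and matches the paper's proof step for step. Both arguments use the same three ingredients: the diagonal entries of $J_\tau=\alpha I-V_{\mathrm{th}}D_{\tau-1}$ lie in $[\alpha-V_{\mathrm{th}}M_\psi,\alpha]$, submultiplicativity of the spectral norm turns the surrogate Jacobian product into $\bar\kappa^{\tau-k}$, and the maximal lag of three at $T_s=4$ gives $\max(1,\bar\kappa^3)$. You also state $\alpha\ge0$ and $V_{\mathrm{th}}>0$ explicitly, which the paper leaves implicit.
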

\begin{proof}
Each $D_\tau$ is diagonal with entries in $[0,M_\psi]$. Hence $J_\tau$ is diagonal with entries in
$[\alpha-V_{\mathrm{th}}M_\psi,\alpha]$, which gives the second inequality. Submultiplicativity of the spectral norm applied to Eq.~\ref{eq:app_temporal_product} gives the first. The largest separation between two of four simulation steps is three.
\end{proof}

The same Jacobian appears in the reset-aware eligibility trace. With
$e^\tau=\partial u^\tau/\partial W$ and
$\mathcal X^\tau=\partial(WX^\tau)/\partial W$, the local recurrence is
\begin{equation}
e^\tau=\mathcal X^\tau+J_\tau e^{\tau-1},
\qquad e^0=0.
\label{eq:app_eligibility}
\end{equation}

\begin{proposition}[Bounded reset-aware eligibility trace]
\label{prop:app_eligibility}
Assume $\|\mathcal X^\tau\|_2\le M_X$ and use the $\bar\kappa$ of Proposition~\ref{prop:app_temporal_bound}. Then
\begin{equation}
\|e^\tau\|_2
\le M_X\sum_{j=0}^{\tau-1}\bar\kappa^j.
\label{eq:app_eligibility_bound}
\end{equation}
For $\bar\kappa<1$, this is at most $M_X/(1-\bar\kappa)$; for the implemented $T_s=4$, it is at most
$M_X(1+\bar\kappa+\bar\kappa^2+\bar\kappa^3)$ without a contraction assumption.
\end{proposition}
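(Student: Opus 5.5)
The plan is to unroll the linear recurrence in Eq.~\ref{eq:app_eligibility} into a closed-form sum and then bound each term with Proposition~\ref{prop:app_temporal_bound}.

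\textbf{Unrolling.} Starting from $e^0=0$, induction on $\tau$ gives
\begin{equation}
e^\tau=\sum_{k=1}^{\tau}\Bigl(\prod_{m=k+1}^{\tau}J_m\Bigr)\mathcal X^k,
\end{equation}
where the product is ordered as $J_\tau J_{\tau-1}\cdots J_{k+1}$ and the empty product ($k=\tau$) is the identity. The inductive step is immediate: substitute the formula for $e^{\tau-1}$ into $e^\tau=\mathcal X^\tau+J_\tau e^{\tau-1}$. Multiplying by $J_\tau$ extends every product by one factor on the left, and $\mathcal X^\tau$ supplies the new $k=\tau$ term.

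\textbf{Bounding each term.} The triangle inequality and submultiplicativity of the spectral norm give $\|e^\tau\|_2\le\sum_{k=1}^{\tau}\bigl\|\prod_{m=k+1}^{\tau}J_m\bigr\|_2\,\|\mathcal X^k\|_2$. Each product has exactly $\tau-k$ factors, and the indices run from $k+1\ge2$ up to $\tau\le T_s$. These are precisely the indices over which $\bar\kappa$ is defined in Proposition~\ref{prop:app_temporal_bound}. So each product norm is at most $\bar\kappa^{\tau-k}$, which is the first inequality of that proposition. Combining this with $\|\mathcal X^k\|_2\le M_X$ and reindexing with $j=\tau-k$ yields Eq.~\ref{eq:app_eligibility_bound}.

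\textbf{The two corollaries.} If $\bar\kappa<1$, the finite geometric sum is dominated by the full series $\sum_{j\ge0}\bar\kappa^j=1/(1-\bar\kappa)$. Without any contraction assumption, $\tau\le T_s=4$ means the sum has at most four nonnegative terms, $j=0,\dots,3$, which gives $1+\bar\kappa+\bar\kappa^2+\bar\kappa^3$.

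\textbf{Main obstacle: interpreting the norms.} The expected difficulty is not the algebra but making the norms meaningful, since $e^\tau$ and $\mathcal X^\tau$ are derivatives with respect to a matrix $W$, hence tensors. I would handle this by viewing them as linear maps from $\operatorname{vec}(W)$ into the membrane space. Then $J_m$ acts on them by left composition, and $\|J\,A\|_2\le\|J\|_2\|A\|_2$ holds for the induced operator norm. Equivalently, one can apply the argument column by column: for each weight coordinate the derivatives are vectors, and $J_m$ acts on each such vector. Either reading, together with the standing convention of Proposition~\ref{prop:app_temporal_bound} that the presynaptic trace is held fixed, makes every step rigorous.
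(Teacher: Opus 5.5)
Your proposal is correct and follows the paper's proof: unroll Eq.~\ref{eq:app_eligibility}, apply the triangle inequality, and bound each product of $j$ Jacobians (all indices $\ge 2$) by $\bar\kappa^j$. Reading $e^\tau$ and $\mathcal X^\tau$ as linear maps on $\operatorname{vec}(W)$ is a helpful clarification the paper leaves implicit; the column-by-column version is also valid, but it bounds a different norm (the maximum column norm) rather than being literally equivalent.
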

\begin{proof}
Unroll Eq.~\ref{eq:app_eligibility}, apply the triangle inequality, and bound every product of $j$ temporal Jacobians by $\bar\kappa^j$.
\end{proof}

To include all future temporal paths, let $c^\tau$ denote the direct derivative of the loss component evaluated at step $\tau$ with respect to $s^\tau$, and let $a^\tau=\partial\loss/\partial u^\tau$ be the total membrane adjoint. BPTT gives
\begin{equation}
\begin{aligned}
a^\tau
&=D_\tau c^\tau+J_{\tau+1}^\top a^{\tau+1},
\qquad a^{T_s+1}=0,\\
\frac{\partial\loss}{\partial W}
&=\sum_{\tau=1}^{T_s}(\mathcal X^\tau)^\top a^\tau.
\end{aligned}
\label{eq:app_adjoint_recursion}
\end{equation}
If $\|c^\tau\|_2\le M_L$ and $\|\mathcal X^\tau\|_2\le M_X$, unrolling Eq.~\ref{eq:app_adjoint_recursion} yields
\begin{equation}
\begin{aligned}
\|a^\tau\|_2
&\le M_\psi M_L
\sum_{j=0}^{T_s-\tau}\bar\kappa^j,\\
\left\|\frac{\partial\loss}{\partial W}\right\|_2
&\le M_XM_\psi M_L
\sum_{\tau=1}^{T_s}
\sum_{j=0}^{T_s-\tau}\bar\kappa^j.
\end{aligned}
\label{eq:app_parameter_gradient_bound}
\end{equation}
This is a local temporal bound for the unrolled module before accounting for spatial Jacobians in preceding layers. The output losses are also bounded at the logit interface: for one valid position,
\begin{equation}
\begin{aligned}
&\left\|\frac{\partial}{\partial z_S}
\left[T_{\mathrm{KD}}^2
\KL(p_T^{T_{\mathrm{KD}}}\|q_S^{T_{\mathrm{KD}}})\right]
\right\|_2\\
&\qquad=T_{\mathrm{KD}}
\|q_S^{T_{\mathrm{KD}}}-p_T^{T_{\mathrm{KD}}}\|_2
\le\sqrt{2}\,T_{\mathrm{KD}},
\end{aligned}
\label{eq:app_sta_logit_bound}
\end{equation}
and hard-label cross-entropy satisfies
$\|\partial\operatorname{CE}/\partial z_S\|_2\le\sqrt{2}$.

These bounds isolate three safeguards: the surrogate slope is bounded, only four temporal steps are unrolled, and the realized global parameter gradient is clipped at $0.7$. They do not prove global convergence, rule out vanishing gradients, or bound products of all spatial layer Jacobians. They also concern the surrogate backward graph, not the nonexistent classical derivative of the hard threshold. Their purpose is narrower: to make explicit why the Stage~1 temporal recurrence has a controlled finite-horizon gradient contribution under stated bounded-input assumptions.

\section{Reproducibility Checklist}
\label{app:checklist}

Stage~1 follows the fixed record in Table~\ref{tab:app_stage1_reproducibility}: the frozen OPT teacher and trainable SNN share tokenization, all alignment masks are explicit, the arctangent surrogate uses $\xi=2$, and the global gradient norm is clipped at $0.7$. Stage~2 computes teacher, active-SNN, and reference-SNN logits on identical detached prefixes. It uses full-vocabulary KL with a stable log-softmax, $T_s=4$, and $\rho=1$; the default training layer set is $\mathcal S_{\mathrm{spk}}=\{3,6,9,12\}$, with only the explicit layer-sensitivity study changing it. In contrast, trajectory diagnostics average all $L$ layers at 125M and replay identical stored tokens through Offline-only, the 500-update Vanilla OPD checkpoint, and the 500-update \method{} checkpoint. The scale study uses three independent runs per scale and reports three-run means for all \method{} task accuracies, Avg., and firing rate. The collapse comparison uses the same 10 matched seeds for Vanilla OPD, Small-LR OPD, Clipped OPD, and \method{}; explicitly identified non-default sensitivity settings and other controlled ablations remain single runs. OPs and energy follow Eqs.~\ref{eq:app_sops}--\ref{eq:app_snn_energy} and are not hardware measurements.

\end{document}